%% file: root.tex
\documentclass[journal]{IEEEtran}

\input{packages}
\input{commands}

\title{Higher-Order Kinematic Derivatives for Aspect-Preserving Real-Time Numerical Inverse Kinematics}

\author{Steffan~Lloyd%
    \thanks{This work has been submitted to the IEEE for possible publication. Copyright may be transferred without notice, after which this version may no longer be accessible.}%
    \thanks{S. Lloyd is with the Department of Forest and Forest Resources, Norwegian Institute of Bioeconomy Research, H{\o}gskoleveien 8, 1433 {\AA}s, Norway (e-mail: steffan.lloyd@nibio.no).}%
}
\newtheorem{proposition}{Proposition}

\begin{document}

\maketitle

\begin{abstract}
This paper leverages the higher-order kinematic derivatives of serial robots to achieve efficient, aspect-preserving real-time inverse kinematics. First, a family of higher-order solvers is derived, with a necessary orientation-error correction map, yielding the QuIK4 and QuIK5 methods with fourth- and fifth-order convergence. Second, an aspect-preserving step limiter extrapolates the Jacobian determinant along the step to predict and avoid singularities, shortening steps only when they head toward a singularity rather than when the current iterate is ill-conditioned. Third, a Hermite-interpolation certifier verifies the aspect of each answer, and a certified restart algorithm recovers failures from proven same-aspect seeds. The proposed methods are shown to outperform prior methods, reducing failures by up to four orders of magnitude and nearly eliminating aspect switching at tracking-level errors. A C++ library is released with ROS~2 integration.
\end{abstract}

\begin{IEEEkeywords}
	Kinematics, numerical inverse kinematics, motion control of manipulators, performance evaluation \& benchmarking.
\end{IEEEkeywords}

\section{Introduction}

\IEEEPARstart{N}{umerical} inverse kinematics methods are a convenient alternative to analytic solutions when a closed form does not exist or is difficult to derive. They are particularly useful on kinematically calibrated robots~\cite{lloydImprovedAccuracy2024}, where imperfect joint alignment precludes a closed-form solution~\cite{liInverseKinematics2022}, or on manipulators whose geometry was driven by other design criteria, such as avoiding the pinch points common to spherical wrists~\cite{Campeau-Lecours2017}. They also allow straightforward null-space optimization on redundant chains~\cite{Zhao1994}, require no derivation step, which suits computer graphics~\cite{Aristidou2018} and reconfigurable robots~\cite{Chen1999}, and lower the barrier to entry for non-experts, for whom an analytic derivation may be out of reach.

In real-time applications, numerical methods have historically been avoided: they were too slow, could fail to converge, and behaved unpredictably near singularities. The most common approach is the Newton--Raphson (NR) method, which approximates the kinematics linearly about the current guess $\bq_k \in \R^n$ through the geometric Jacobian $\bJ(\bq) \in \R^{6\times n}$ and iterates toward the root as
\begin{align} \label{eq:nr}
    \bq_{k+1} &= \bq_k + \dq_{\rm nr}, &
    \dq_{\rm nr} &=  - \bJ \setminus \b{e},
\end{align}
where $\b{e} = \begin{bmatrix}\b{v}_e^\transpose & \bomega_e^\transpose\end{bmatrix}^\transpose$ is the spatial twist from the current pose to the desired one, and $\b{x} = \b{A} \setminus \b{b}$ denotes the linear solution of $\b{A}\,\b{x} = \b{b}$; for an underdetermined system, the minimum-norm solution is implied.
Existing generic kinematic libraries implement this algorithm: KDL, one of the more popular generic inverse kinematics solvers, offers a Newton--Raphson solver and its damped Levenberg--Marquardt implementation~\cite{smitsKDLKinematics,bruyninckxOpenRobot2001}, and TRAC-IK runs a Newton--Raphson solver simultaneously with a sequential quadratic programming solver~\cite{beesonTRACIK2015}.

Recent work, together with gains in processor speed, has removed these objections. Lloyd \etal~\cite{lloydFastRobust2022} showed that higher-order methods leveraging the kinematic Hessian $\H \triangleq \tfrac{\partial}{\partial \bq} \bJ$ can be applied to inverse kinematics, yielding the step
\begin{align} \label{eq:quik}
    \dq_{\rm quik} = - \left( \bJ + \tfrac{1}{2} \, \H \, [\,\dq_{\rm nr}\,] \right) \setminus \b{e},
\end{align}
where the bracket notation $\tensor{A}\,[\,\b{x}^{i}\,]$, used throughout this paper, denotes the contraction of the tensor $\tensor{A}$ against $i$ copies of the vector $\b{x}$, with the exponent omitted when $i=1$. The resulting \textit{Quick Inverse Kinematics} (QuIK) method converges cubically, runs faster than Newton--Raphson, and improves reliability by more than an order of magnitude. Lloyd {\etal} have since demonstrated this method for real-time inverse kinematics on a kinematically calibrated robotic machining system under admittance control~\cite{lloydPrecisionRobotic2024}. Chen \etal~\cite{chenFifthOrderPOE2026} built on this work, composing a Halley step with a Newton step whose Jacobian is extrapolated from the Hessian of the first point, achieving fifth-order convergence at the cost of an additional error evaluation and linear solve. This ``two-point'' scheme, which evaluates the error at a second point while reusing the derivatives of the first, builds on existing scalar literature~\cite{traubIterative1964,kouFamilyFifthOrder2007}. As the order of a composite iteration depends on how an iteration is defined, we compare methods by order and cost together.

The Newton--Raphson and QuIK methods solve the first- and second-order Taylor models of the error function, respectively~\cite{traubIterative1964,yamamotoTangentHyperbolas1988}. However, this method family can be extended to any order. With the order-$m$ kinematic derivatives denoted as $\Dm{m}$, such that $\Dm{1} = \bJ$ and $\Dm{2} = \H$, the step
\begin{align} \label{eq:taylorstep}
    \dq_{(p)} &= - \Aop{p} \setminus \b{e}, \nonumber\\
    \Aop{p} &= \sum\nolimits_{m=1}^{p} \tfrac{1}{m!} \,\, \Dm{m} \,[\,{\dq_{(p-1)}}\!^{m-1}\,]\,,
\end{align}
solves the degree-$p$ Taylor model of the error, using the order-$(p-1)$ step substituted into its nonlinear term, with $\dq_{(1)} = \dq_{\rm nr}$ and \mbox{$\dq_{(2)} = \dq_{\rm quik}$}. So long as $\Dm{m}$ is exact, the iteration converges at order $p+1$ (\cref{prop:order}, Appendix), and for $p\leq2$ it reproduces the Newton--Raphson and Halley (QuIK) methods.

In practice, higher-order models are rarely used, since $\Dm{m}$ carries $6\,n^m$ partial terms for an $n$-DOF problem, and their cost outweighs the accuracy gained. Serial-chain inverse kinematics is an unusually favorable case: $n$ is small, evaluating $\b{e}$ is relatively expensive, and $\Dm{m}$ follow a Lie-bracket recursion on the joint screws~\cite{mullerHigherDerivatives2014,mullerHigherOrderKinematics2019}, leaving them sparse, symmetric, and inexpensive to compute~\cite{lloydFastRobust2022}. Moreover, \cref{eq:taylorstep} requires $\Dm{m}$ only in contracted form, never as the full tensor, which this paper exploits to accelerate its evaluation.

Previous work on numerical inverse kinematics, \cite{lloydFastRobust2022} included, measures convergence to \textit{any} solution within tolerance as a success~\cite{beesonTRACIK2015,chenFifthOrderPOE2026,lloydFastRobust2022}. A 6R chain can have up to 16 solutions per pose~\cite{raghavanRoth1993}, and on most industrial arms, moving from the seed to a solution in another aspect takes the arm through a singular configuration, where a continuous tool path would require unbounded joint rates. Likewise, solve times are reported as means or medians over large sample sets, but real-time controllers have a fixed period, so a slow solve can be as detrimental as a failed one. Real-time use therefore imposes two constraints that existing benchmarks do not capture:
\begin{enumerate}
    \item the solution must lie in the same aspect as the seed, so that they are joined by a singularity-free path; and
    \item the solver must complete within a fixed deadline, and an overrun is a failure rather than a slow success.
\end{enumerate}
We require that the seed's aspect be maintained, rather than its branch: a solver can readily check the former at every call, from the sign of $\det\bJ$ along the chord $\bq_0 \to \bq^\star$ (\cref{sec:aspect_certify}). On non-cuspidal arms, which include most industrial designs, each aspect holds at most one nonsingular solution per pose, and the two requirements coincide~\cite{wengerChablatReview2023}. On cuspidal arms, such as the Kinova Jaco, several solutions can share an aspect, and a change between them is not immediately detectable from the sign of $\det\bJ$~\cite{salunkheTrajectory2023}. Similarly, kinematic calibration of an industrial arm can make a nominally non-cuspidal arm cuspidal.
In practice, however, this distinction is less important than it may appear. The corridors connecting two same-aspect solutions are often narrow, particularly when the cuspidality arises from a perturbation (e.g., a calibration) of a nominally non-cuspidal model, and from a given seed, typically only one of the solutions can be reached along a straight singularity-free path. If both can be so reached, we consider both equally valid for the purpose of this study.

Existing work has developed strategies to make iterative solvers safer in two ways. Damped least-squares methods shorten and rotate the step when $\bJ$ is ill-conditioned~\cite{nakamuraInverseKinematic1986,wamplerManipulatorInverse1986}. \textit{Adaptive damping} raises the damping factor as the smallest singular value of $\bJ$ falls~\cite{chiaveriniReviewDamped1994}, \textit{selectively damped least squares} (SDLS) damps each singular component of the step separately~\cite{bussSelectivelyDamped2005}, and \textit{Levenberg--Marquardt} variants set the damping from the error norm~\cite{sugiharaSolvabilityUnconcerned2011}. Line searches and trust regions accept a step only when the error decreases~\cite{nocedalNumerical2006}, and step clamps bound its length~\cite{bussSelectivelyDamped2005}. However, these methods judge the step from $\bJ$ at the current iterate, and line searches from the error at the trial point. Neither models how $\bJ$, and with it $\det\bJ$, changes along the step. A step from a well-conditioned configuration is therefore taken in full even if it crosses a singular surface, and a step that does cross one can still reduce the error and pass the line search. Conversely, a step near a singularity is damped, even when it points away from it. The aspect literature has investigated whether configurations are separated by a singularity, but offline and over the whole workspace, rather than under the real-time constraints of a solver~\cite{wengerChablatReview2023,salunkheKinematicIssues2025}.

Higher-order methods such as QuIK gain speed by taking fewer, larger steps; however, a larger step is more likely to cross a singularity and arrive in a different aspect. Damping and step clamps reduce this risk by shortening and warping the step, which discards the accuracy that motivated the higher-order model.
This paper instead uses the higher-order derivatives in three complementary ways for efficient, aspect-preserving real-time inverse kinematics: first, to take more accurate steps through higher-order models; second, to predict where $\det\bJ$ changes sign along a step; and third, to certify the aspect of the final answer. This paper makes five contributions:
\begin{enumerate}
    \item We show that the steps of \cref{eq:taylorstep} require the higher-order kinematic derivatives only as contractions along the step. Evaluated with the Lie-bracket recursion of \cite{mullerHigherOrderKinematics2019} over cumulative joint rates, their cost scales as $O(m^2 n)$ instead of $O(n^m)$ for the full tensors, and the contractions can be evaluated highly efficiently.
    \item We derive the orientation-error correction map required in Taylor models of third and higher order, and through it obtain the \textit{QuIK4} and \textit{QuIK5} methods, which converge at fourth and fifth order.
    \item We propose a step limiter that uses the same kinematic derivatives to extrapolate where $\det\bJ$ changes sign along the step, and shortens the step accordingly to avoid it.
    \item We propose an efficient aspect-certification algorithm that interpolates $\det\bJ$ along the chord from seed to answer with Hermite polynomials, built from the limiter's determinant derivatives, and subdivides the chord until it is cleared or a singularity is found. \textit{Certified restarts} then use this certification to verify answers and, when a solve fails or is refused, restart from a same-aspect seed.
    \item We evaluate these solvers and safeguards against existing methods by the rate at which they fail to return a same-aspect answer within a hard deadline, and release the C++ library with ROS~2 integration.
\end{enumerate}

This paper is organized as follows:
\cref{sec:problem} states the problem and the metrics used throughout.
\cref{sec:derivs} derives the efficient contraction-ladder evaluation of the higher-order kinematic derivatives along a step.
\cref{sec:solvers} introduces the QuIK4 and QuIK5 methods, including the orientation-error correction map.
\cref{sec:limiter} develops the aspect-preserving step limiter from a determinant-derivative front model and Hermite sign-change detection, and 
\cref{sec:aspect_certify} presents the Hermite trust-band certifier and the certified-restart algorithm.
\cref{sec:redudant} discusses the extension to redundant arms via the extended Jacobian.
\cref{sec:validation} evaluates the methods experimentally against prior solvers, and 
\cref{sec:conclusion} concludes.

\section{Problem Statement}\label{sec:problem}

Let $\bT(\bq) \in S\hspace{-.02em}E(3)$ denote the forward kinematics of an $n$-DOF serial chain and let $\bT_d$ be a target pose. The inverse kinematics problem seeks a root $\bq^\star$ such that $\b{e}(\bq^\star) = \b{0}$, where the pose error
\begin{align} \label{eq:error}
    \b{e}(\bq) = \begin{bmatrix} \b{v}_e \\ \bomega_e \end{bmatrix}
    = \begin{bmatrix} \b{p}_n(\bq) - \b{p}_d \\
      \log\!\big(\bR_n(\bq)\,\bR_d^\transpose\big)^{\vee} \end{bmatrix},
\end{align}
where $\log(\cdot)^\vee$ extracts the rotation vector of a rotation matrix, and $\b{e}$ is thus the spatial twist between the current and desired poses, expressed in exponential coordinates as in \cite{lloydFastRobust2022}. With this definition, the geometric Jacobian approximates the error Jacobian to first order, and equals it at a root~\cite{sugiharaSolvabilityUnconcerned2011}, and a solve is declared converged once $\|\b{e}\| < \varepsilon$. We focus on non-redundant 6-DOF chains ($n=6$); however, the solver and derivative formulas in \cref{sec:derivs,sec:solvers} hold for any~$n$. The step limiter and aspect certifier of \cref{sec:limiter,sec:aspect_certify} require $\det\bJ$ and therefore a square Jacobian; they are developed for $n = 6$. \Cref{sec:redudant} discusses the extension of these methods to redundant arms explicitly. 

A configuration is singular when $\det\bJ(\bq) = 0$. The \textit{aspects} of a manipulator are the maximal connected regions of joint space on which $\det\bJ \neq 0$, bounded by singularities~\cite{wengerChablatReview2023}. Two configurations in one aspect are joined by a path on which $\bJ$ stays invertible; every path between aspects meets a singularity. We require that the solution $\bq^\star$ lie in the same aspect as the seed~$\bq_0$, and in testing score a solve as correct only when it does. On the KR6, this is decided exactly from the closed-form factors of $\det\bJ$; chains without such a label are treated in \cref{sec:generality}.

For real-time use, we define a solver deadline $\tau$ set by the control period. A solve can end in one of four ways: it converges to a same-aspect solution; it fails to converge within~$\tau$; the certifier of \cref{sec:aspect_certify} refuses the answer; or it converges to a solution in a different aspect. A \textit{failure} is any outcome other than a same-aspect answer by the deadline. Of these, an unconverged or refused solve is recoverable, since the controller can hold its previous setpoint and retry, but a wrong-aspect answer could command a discontinuity and is far more damaging.

Unless otherwise specified, the primary test chain is a KUKA KR6~\cite{lloydFastRobust2022}, a non-cuspidal, Puma-style robot whose $\det\bJ$ factors in closed form as a product of elbow, shoulder, and wrist terms. The signs of these factors serve as the ground-truth aspect label, independently of the certifier, and are used to score the algorithms. Sample problems are generated as in~\cite{lloydFastRobust2022}: a ground-truth configuration $\bq^\star$ is drawn uniformly from $[-\pi, \pi]^n$ and used to compute the target pose $\bT_d = \bT(\bq^\star)$. The initial guess $\bq_0$ is obtained via a perturbation
\begin{align} \label{eq:sampling}
    \b{r} &\sim \mathcal{U}(-1,1)^n, &
    \bq_0 &= \bq^\star + \sigma \left( \tfrac{1}{n} \|\b{r}\|_1 \right)^{-1} \b{r},
\end{align}
so that $\sigma$ is the mean absolute initial joint error and sets the difficulty of the problem. Problems in which $\bq_0$ lies in a different aspect from $\bq^\star$ are discarded and regenerated, so that $\bq^\star$ is the unique same-aspect answer. Results are reported against $\sigma$ and $\tau$ unless otherwise noted. All timings are single-threaded C++, compiled with {\small \texttt{-O3}} optimization, from the released library, measured on a MacBook Pro with M5~Pro chipset.

\section{Efficient Higher-Order Kinematic Derivatives Along a Step}\label{sec:derivs}

The kinematic Hessian $\H$ can be built from the columns of $\bJ$ alone, with minimal additional computation \cite{hourtashKinematicHessian2005,lloydFastRobust2022}. In screw theory, this is the classical result that the derivative of a joint screw with respect to a preceding joint variable is the Lie bracket of the two screws \cite{mullerHigherDerivatives2014,mullerHigherOrderKinematics2019}. Define $\b{j}_{vk}$ and $\b{j}_{\omega k}$ as the linear and angular columns of $\bJ$, and $\b{p}_n$ as the tool position, so that $\b{j}_{vk} = \partial \b{p}_n/\partial q_k$. Turning a revolute joint $a$ rotates every Jacobian column above it rigidly about that joint's axis, giving
\begin{align} \label{eq:primitives}
    \tfrac{\partial \b{j}_{\omega b}}{\partial q_a} &= \b{j}_{\omega a} \times \b{j}_{\omega b}, &
    \tfrac{\partial \b{j}_{v b}}{\partial q_a} &= \b{j}_{\omega a} \times \b{j}_{v b},
        && a \leq b, \nonumber \\
    \tfrac{\partial \b{j}_{\omega b}}{\partial q_a} &= \b{0}, &
    \tfrac{\partial \b{j}_{v b}}{\partial q_a} &= \b{j}_{\omega b} \times \b{j}_{v a},
        && a > b,
\end{align}
where the two linear cases are related by the symmetry of $\nabla^2 \b{p}_n$, so only one needs to be computed.

The identities behind \cref{eq:primitives}, however, are not specific to obtaining the second derivative from the first. Generalizing, we write $\Dm{m} \triangleq \nabla^{m-1} \bJ$ for the order-$m$ kinematic derivative, with $\Dm{1} = \bJ$ and $\Dm{2} = \H$ as before. Differentiating \cref{eq:primitives} further, the Jacobi identity collapses the two terms of each product rule into one, so every entry of $\Dm{m}$ is a single nested cross product of Jacobian columns \cite{mullerHigherDerivatives2014,mullerHigherOrderKinematics2019}. For joints $a_1 \leq \dotsm \leq a_m$,
\begin{align} \label{eq:chain}
    \hspace{-.2em}\tfrac{\partial^{m-1} }{\partial q_{a_1} \cdots \partial q_{a_{m-1}}} \, \b{j}_{v a_m}
      &= \b{j}_{\omega a_1} \times \Big( \dotsm\, \times
         \big( \b{j}_{\omega a_{m-1}} \times \b{j}_{v a_m} \big) \Big), \nonumber \\
    \hspace{-.2em}\tfrac{\partial^{m-1} }{\partial q_{a_1} \cdots \partial q_{a_{m-1}}} \, \b{j}_{\omega a_m}
      &= \b{j}_{\omega a_1} \times \Big( \dotsm\, \times
         \big( \b{j}_{\omega a_{m-1}} \times \b{j}_{\omega a_m} \big) \Big).
\end{align}
The linear block $\nabla^m \b{p}_n$ is symmetric in all $m$ indices, while an angular entry is zero unless the differentiated column is that of the highest joint. A prismatic joint enters only as the innermost column.

Although each partial derivative term is cheap, there are $6\,n^m$ of them. Critically, however, the full tensors need never be formed: \cref{eq:taylorstep} requires only the contraction $\Dm{m}\dqp{m-1}$, a $6 \times n$ matrix
\begin{align} \label{eq:contraction}
    \Dm{m}\dqp{m-1} = \begin{bmatrix}
        \b{d}^{(m)}_{v1} & \dotsm & \b{d}^{(m)}_{vn} \\[.4em]
        \b{d}^{(m)}_{\omega 1} & \dotsm & \b{d}^{(m)}_{\omega n}
    \end{bmatrix}\!,
\end{align}
whose $m=1$ case is $\bJ$ itself. This matrix is the $(m{-}1)${\th} derivative of $\bJ$ along the ray $\bq + t\,\dq$; equivalently, it is the $(m{-}1)${\th} time derivative of $\bJ$ at the constant joint rate $\dot{\bq} = \dq$ \cite{mullerHigherOrderKinematics2019,fuAnalyticalExpressions2020}. Contracting \cref{eq:primitives} against $\dq$ and pulling the sums inside the cross products, each column needs only the cumulative vectors
\begin{align} \label{eq:cumsums}
    \bomega_k &= \sum\nolimits_{i \leq k} \delta q_i \, \b{j}_{\omega i}, &
    \b{v}_k &= \sum\nolimits_{i > k} \delta q_i \, \b{j}_{vi},
\end{align}
where $\bomega_k$ is the angular velocity of link $k$ when the joints move at rate $\dq$, and $\b{v}_k$ is the contribution to the tool's linear velocity from the joints above $k$. The vector $\dq$ is fixed, so differentiating a cumulative vector returns the same sum raised one order,
\begin{align} \label{eq:cumsums_m}
    \bomega^{(m)}_k &= \sum\nolimits_{i \leq k} \delta q_i \, \b{d}^{(m)}_{\omega i}, &
    \b{v}^{(m)}_k &= \sum\nolimits_{i > k} \delta q_i \, \b{d}^{(m)}_{vi},
\end{align}
with $\bomega^{(1)}_k = \bomega_k$ and $\b{v}^{(1)}_k = \b{v}_k$. Repeated differentiation follows Leibniz's rule and gives the ladder formula
\begin{align} \label{eq:leibniz}
    \b{d}^{(m+1)}_{\omega k}\! &= \!\sum_{s=1}^{m} \tbinom{m-1}{s-1}
        \ \bomega^{(s)}_k \times \b{d}^{(m-s+1)}_{\omega k}, \\
    \b{d}^{(m+1)}_{vk}\! &= \!\sum_{s=1}^{m} \tbinom{m-1}{s-1} \Big(
        \bomega^{(s)}_k \times \b{d}^{(m-s+1)}_{vk}\!
        + \b{d}^{(s)}_{\omega k} \times \b{v}^{(m-s+1)}_k \Big). \nonumber
\end{align}
The first rung, $\b{d}^{(2)}_{\omega k} = \bomega_k \times \b{j}_{\omega k}$, is the form commonly used to compute $\dot{\bJ}$. Here, \cref{eq:leibniz} is the known recursion for higher time derivatives \cite{mullerHigherOrderKinematics2019,fuAnalyticalExpressions2020}, specialized to a constant joint rate and written for the geometric Jacobian, whose tool-point reference adds the second term in the linear row.

Each rung adds $m$ terms per column, so the ladder costs $O(m^2 n)$, instead of $O(n^m)$ for forming the full tensor and contracting it. Each rung also reuses the sums of the rungs below it, so $\Dm{2}\dqp{1}$, $\Dm{3}\dqp{2}$, and $\Dm{4}\dqp{3}$ come from a single sweep over the chain. \Cref{tab:derivcost} gives operation counts and times for $n=6$: through $m=4$, the ladder adds \qty{0.106}{\micro\second} to the \qty{0.057}{\micro\second} of the forward kinematics and Jacobian, against \qty{3.09}{\micro\second} for the full tensors.

\begin{table}[t]
	\caption{Evaluation cost of kinematic derivative contractions for $n=6$.}
	\label{tab:derivcost}
	\footnotesize
    \renewcommand{\arraystretch}{1.3}
	\setlength{\tabcolsep}{3pt}
	\begin{tabularx}{\columnwidth}{
        @{}X@{}
        r@{\hspace{.8em}}
        r@{}
        r@{\hspace{1.3em}} %
        r@{\hspace{.8em}}
        r@{}
    }
		\multicolumn{3}{r@{}}{\textbf{Full tensor \& contraction}}
		& \multicolumn{3}{r@{}}{\textbf{Contraction ladder}} \\
		&$+ \,/\, \times$ & Time [\si{\micro\second}]
		&& $+ \,/\, \times$ & Time [\si{\micro\second}] \\
		\toprule
        \input figures/tab_derivcost
		\bottomrule\\
	\end{tabularx}
    \\[-1em]
    \footnotesize Note: all numbers are non-cumulative, assuming that the lower-rung values have already been computed; the total shows the combined cost for all orders together, with FK and Jacobian. Computation times are averaged over $N=10^6$ samples. Operations and time scale linearly with $n$ for the contraction ladder, whereas they scale as $O(n^m)$ for the full tensor.

\end{table}

The derivatives above describe the forward kinematics of the chain, whereas \cref{eq:taylorstep} is a Taylor model of the error \cref{eq:error}. The two agree only in part: for position the derivative is exact, but for orientation it is not.
Were $\bJ_\omega$ the gradient of some orientation coordinate (i.e., an analytic Jacobian), its mixed partial derivatives would commute, and $\partial \b{j}_{\omega k}/\partial q_i$ would equal $\partial \b{j}_{\omega i}/\partial q_k$ for every $i < k$. Instead, $\bJ_\omega$ describes the relationship $\bomega = \bJ_\omega \, \dot{\bq}$, and by \cref{eq:primitives} we have $\partial \b{j}_{\omega k}/\partial q_i = \b{j}_{\omega i} \times \b{j}_{\omega k}$ while $\partial \b{j}_{\omega i}/\partial q_k = \b{0}$: the mixed partials differ whenever the two axes are not parallel, so no orientation coordinate has $\bJ_\omega$ as its gradient. The angular blocks of $\Dm{m}$ are thus not the Taylor coefficients of the orientation error. This discrepancy cancels in the first- and second-order models of \cref{eq:nr,eq:quik}; for higher-order models, it does not. \Cref{sec:solvers} derives the mismatch precisely, along with the correction terms needed.

\section{The QuIK4 and QuIK5 Methods}\label{sec:solvers}

Newton--Raphson and QuIK are the first two members of \cref{eq:taylorstep}, and coincide with the order-1 and order-2 Householder methods (Newton and Halley). This section defines two further members: \textit{QuIK4} ($p=3$) and \textit{QuIK5} ($p=4$), each an application of \cref{eq:taylorstep} seeded with the step estimate from the preceding member.

With $\dq_{\rm nr} = \dq_{(1)}$ and $\dq_{\rm quik} = \dq_{(2)}$ as in \cref{eq:nr,eq:quik}, the nominal QuIK4 step $\dq_{(3)}$ is
\begin{align}\label{eq:quik4}
    \!\!\dq_{(3)} &= -\b{A}_3 \setminus \b{e}, \nonumber\\
    \b{A}_3 &= \bJ + \tfrac{1}{2} \H\,[\,\dq_{(2)}\,] + \tfrac{1}{6} \Dm{3}[\,\dq_{(2)}^{\,2}\,].
\end{align}
The QuIK5 step $\dq_{(4)}$ extends the sequence one rung further. Seeded with the QuIK4 step $\dq_{(3)}$, its nominal equation is
\begin{align}\label{eq:quik5}
    \!\!\dq_{(4)} &= -\b{A}_4 \setminus \b{e},\\
    \b{A}_4 &= \bJ + \tfrac{1}{2} \, \H\,[\,\dq_{(3)}\,]
               + \tfrac{1}{6} \, \Dm{3}[\,\dq_{(3)}^{\,2}\,]
               + \tfrac{1}{24} \, \Dm{4}[\,\dq_{(3)}^{\,3}\,]. \nonumber
\end{align}
QuIK4 and QuIK5 are named after the convergence order they achieve, not their model order $p$: QuIK4 converges at fourth order and QuIK5 at fifth, since each order-$p$ member converges at order $p+1$ (\cref{prop:order}, Appendix).

Implemented naively as above, however, neither method reaches its full rate; both cap at third-order convergence, the same as baseline QuIK. The problem is the orientation error. The linear part of $\b{e}$ in \cref{eq:error} is a point difference whose derivatives from \cref{sec:derivs} are exact. The Jacobian and its derivatives, however, describe the \textit{angular velocity} of the tool, not the orientation error. Unlike positions, rotations compose rather than add: the composition of two rotation vectors $\b{a}$ and $\b{b}$ yields $\b{a} + \b{b} + \tfrac{1}{2} \b{a} \times \b{b} + \dotsm$, not $\b{a} + \b{b}$. Nor can the issue be sidestepped by redefining the error, since no orientation coordinate has $\bJ_\omega$ as its gradient (\cref{sec:derivs}).

Consider the step along $\bq + t\,\dq$ for $t \in [0,1]$, and let $\bpsi(t)$ be the rotation vector of the tool's own motion up to $t$,
\begin{align} \label{eq:psi_def}
    \bR(\bq + t\,\dq) = \exp\!\big(\skewMat{\bpsi(t)}\big) \, \bR(\bq).
\end{align}
The rotation $\bpsi(t)$ is determined by the chain and step alone, not the target. By construction $\bpsi(0) = \b{0}$, and at the full step we require $\exp(\skewMat{\bpsi(1)}) = \bR_d \, {\bR_n}\!\!^\transpose$, so
\begin{align} \label{eq:psi_target}
    \bpsi(1) = \log\!\big(\bR_d \, {\bR_n}\!\!^\transpose\big)^{\vee} = -\bomega_e,
\end{align}
where $\bomega_e$ is the angular block of $\b{e}$.

\begin{figure*}[b]
    \centering
	\includegraphics[width=.99\textwidth, trim={0.4cm 0.2cm 0.cm 0.1cm},clip]{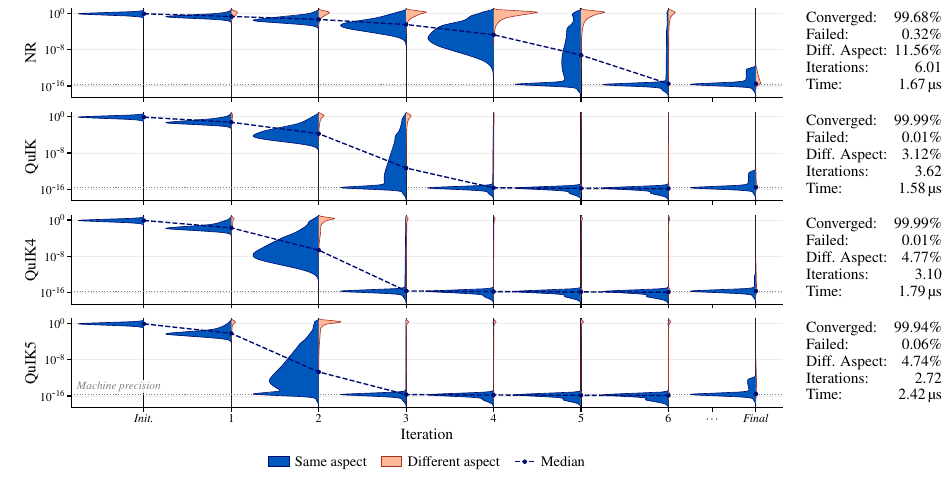}
    \caption{Convergence of the error $\|\b{e}\|$ on the KUKA KR6 with mean absolute initial joint error $\sigma=0.3$\,rad, over $N=10^6$ random poses. Right-hand statistics give the percentage converged (to any solution), the percentage failed, the percentage converged to a different aspect, the interpolated median number of iterations, and the median solve time. Distributions are scaled individually to show shape; they are not to scale relative to each other.}
	\label{fig:convergence}
\end{figure*}

To solve for $\dq$, we expand $\bpsi(t)$ about $t=0$. Along the ray, the tool turns at $\bomega_n(t) = \bJ_\omega(\bq + t\,\dq)\,\dq$, whose derivatives in $t$ are the same contractions of \cref{sec:derivs},
\begin{align} \label{eq:omega_derivs}
    \bomega^{(m)}_n = \tfrac{\partial^{m-1}}{\partial t^{m-1}}
        \, \bomega_n(t) \Big|_{t=0}
        = \big(\Dm{m}\dqp{m-1}\big)_\omega \, \dq ,
\end{align}
so that $\bomega^{(1)}_n = \bJ_\omega \dq$ and $\bomega^{(2)}_n = (\H\dqp{})_\omega \dq$.
If rotations were additive, $\bpsi(1)$ would be a direct Taylor sum of these terms, exactly as supplied by \cref{eq:quik4,eq:quik5}. Instead, rotations compose, and the accumulated rotation obeys \cite[Sec.~7.2]{barfootStateEstimation2017}
\begin{align}\label{eq:psi_expansion}
    \tfrac{\partial}{\partial t}\, \bpsi = \bomega_n - \tfrac{1}{2} \bpsi \times \bomega_n + \tfrac{1}{12} \bpsi \times (\bpsi \times \bomega_n) - \dotsm.
\end{align}
Equation \cref{eq:psi_expansion} can be used directly to derive the Taylor coefficients $\bpsi_k = \bpsi^{(k)}(0)$, giving the expansion
\begin{align}\label{eq:psi_exp}
    \bpsi(t)
    &= \bpsi_0 + \bpsi_1 \, t + \tfrac{1}{2}\, \bpsi_2 \, t^2 + \tfrac{1}{6}\, \bpsi_3 \, t^3 + \tfrac{1}{24} \, \bpsi_4 \, t^4 + \dotsm,
\end{align}
where $\bpsi_0 = \b{0}$ and
\begin{align*}
    \bpsi_1 &= \bomega_n^{(1)},
    & 
    \bpsi_2 &= \bomega_n^{(2)},
    \nonumber\\
    \bpsi_3 &= \bomega_n^{(3)} + \tfrac{1}{2} \bomega_n^{(2)} \times \bomega_n^{(1)},
    & 
    \bpsi_4 &= \bomega_n^{(4)} + \bomega_n^{(3)} \times \bomega_n^{(1)}.
\end{align*}
Then, evaluating at $t=1$ through fourth order, we obtain
\begin{align}
    \bpsi(1) &= 
    \bomega_n^{(1)} + \tfrac{1}{2} \,  \bomega_n^{(2)} + \tfrac{1}{6} \,  \bomega_n^{(3)}+ \tfrac{1}{24} \, \bomega_n^{(4)} 
        \\
    &\phantom{=} \ + \tfrac{1}{12}\, \bomega_n^{(2)} \times \bomega_n^{(1)} + \tfrac{1}{24}\, \bomega_n^{(3)} \times \bomega_n^{(1)} 
        \nonumber\\
    &= \big[\b{A}_p\,\dq\big]_{\omega} + \tfrac{1}{12}\, \bomega_n^{(2)} \times \bomega_n^{(1)} + \tfrac{1}{24}\, \bomega_n^{(3)} \times \bomega_n^{(1)}.\nonumber
\end{align}
The first line contains the standard Taylor terms $\tfrac{1}{m!}\bomega_n^{(m)}$ already present in $[\b{A}_p\,\dq]_{\omega}$, but the second line contributes two cross-product corrections that the kinematic derivatives alone do not capture. These corrections are the same order as the third- and fourth-order terms in $\b{A}_p$, so QuIK4 and QuIK5 cannot reach their full convergence rate without them.
This analysis also explains why Newton--Raphson and QuIK required no corrections: $\bpsi_1 = \bomega_n^{(1)}$ and $\bpsi_2 = \bomega_n^{(2)}$ coincide with the terms already in the model, and the additional cross products first appear in $\bpsi_3$ and $\bpsi_4$.

These terms are carried into the QuIK4 and QuIK5 solvers by modifying \cref{eq:quik4,eq:quik5} as
\begin{align}\label{eq:quik_corr}
    \dq_{(p)} &= - \b{A}_p \setminus \big( \b{e} + \b{c}_p \big) \nonumber\\
    \b{c}_p &= 
    \begin{bmatrix}
        \b{0} \\
        \tfrac{1}{12} \bomega_n^{(2)} \times \bomega_n^{(1)} + \tfrac{1}{24} \, \bomega_n^{(3)} \times \bomega_n^{(1)}
    \end{bmatrix}\!.
\end{align}

At each stage, \cref{eq:leibniz} generates all contracted derivative terms in a single pass, plus two additional cross products for the correction. The contraction ladder must be recomputed at each level of \cref{eq:taylorstep}, since each level contracts along a different step: QuIK contracts along $\dq_{\rm nr}$, QuIK4 along $\dq_{\rm quik}$, and QuIK5 along $\dq_{(3)}$. Even so, contracting along a vector remains far cheaper than forming the full tensors.

One safeguard proved necessary when applying \cref{eq:quik_corr}: a near-singular intermediate solve can occasionally return a large step. The matrix $\b{A}_p$ contains terms contracted from this estimated step, and the contraction grows as $\|\dq\|$ for $\Dm{2}$ but as $\|\dq\|^3$ for $\Dm{4}$. A large step can therefore swamp the lower-order terms in $\b{A}_p$, inflating the matrix and driving the corrected step to near zero. This traps the solver in the singular region. To prevent this, the substituted estimate is used only within a trust radius $\rho = c \, ( \|\b{v}_e\| / L + \|\bomega_e\| )$, where $L$ is a characteristic length of the chain that normalizes linear and angular measurements; if $\|\dq\| > \rho$, the Newton step $\dq_{\rm nr}$ is substituted instead. Setting $c=500$ corrected most occurrences while activating on under 1\% of iterations.

\cref{fig:convergence} runs the four methods on $N = 10^6$ poses of the KUKA KR6 robot, seeded with $\sigma=0.3$\,rad mean absolute joint error, without the joint-step cap, the Cartesian clamp, or the limiters described later in \cref{sec:limiter}. The ladder behaves as derived: after two iterations, the median error is $4.6\cdot10^{-2}$ for Newton--Raphson, $1.9\cdot10^{-4}$ for QuIK, $2.3\cdot10^{-7}$ for QuIK4, and $2.0\cdot10^{-11}$ for QuIK5, and the median iteration count falls from 6.0 for Newton--Raphson to 3.6 (QuIK), 3.1 (QuIK4), and 2.7 (QuIK5). The higher-order methods cost more per iteration, however, and the median solve time is \qty{1.58}{\micro\second} for QuIK, \num{1.79} and \qty{2.42}{\micro\second} for QuIK4 and QuIK5, and \qty{1.67}{\micro\second} for Newton--Raphson.

Importantly, every method regularly converges to a solution in a different aspect: QuIK changed aspects on 3.1\% of solves, against 4.8\% for QuIK4 and 4.7\% for QuIK5. Newton--Raphson was worst at 11.6\%. The following section addresses this issue directly, using the same higher-order derivative information to predict singularities along a step.

\section{Aspect-Preserving Step Limiter}\label{sec:limiter}

The previous section shows that higher-order methods can converge in fewer iterations. However, in doing so, they tend to take longer steps, allowing the iterate to cross a singular surface. The key insight of this section is that the same higher-order derivatives used by \cref{sec:solvers} to locate the desired pose can also be used to locate, and avoid, singularities. The boundaries of an aspect are defined by the singular condition $\det\bJ = 0$, a polynomial in the entries of $\bJ$, whose derivatives \cref{sec:solvers} already forms, with no further kinematics or decompositions. The determinant also passes \textit{through} zero at the boundary, becoming negative on the other side. In this section, we use both properties in complementary ways. First, an inexpensive second-order Taylor model of $\det \bJ$ along the step direction is used as a \textit{front model} to predict the crossing point and to shorten the step accordingly. Second, we use a \textit{Hermite sign-change detection algorithm} to detect crossings the front model missed, and to backtrack accordingly before the step is accepted.

\subsection{A determinant-derivative front model along the step}\label{sec:limiter_model}

Consider again the ray $\bq + t\, \dq$, and let \mbox{$\b{A}(t) \triangleq \bJ(\bq + t\,\dq)$} be the manipulator Jacobian along the ray, and \mbox{$\Delta(t) \triangleq \det\b{A}(t)$} its determinant. Then, a Taylor-series expansion of $\b{A}(t)$ about $t=0$ is
\begin{align}\label{eq:A_exp}
    \b{A}(t) = \bJ + t\b{A}^{(1)}(0) + \tfrac{t^2}{2} \b{A}^{(2)}(0) + \dotsm
\end{align}
where $\b{A}^{(m)}(0)$ are the tensor contractions along the step from \cref{eq:leibniz},
\begin{align}
    \b{A}^{(m)}(0) &= \Dm{m+1}[\dq^m], & m&=1,2.
\end{align}
Define $\b{M}_m \triangleq \bJ^{-1} \, \b{A}^{(m)}(0)$, and \cref{eq:A_exp} becomes
\begin{align}
    \b{A}(t) = \bJ \, \Big( \,\b{I} + t\,\b{M}_1 + \tfrac{t^2}{2}\,\b{M}_2 + \dotsm \,\Big) = \bJ \, \Big(\b{I} + \b{X}\Big),
\end{align}
with $\b{X} = t\,\b{M}_1 + \tfrac{t^2}{2}\,\b{M}_2 + \dotsm$. Determinants are multiplicative, so we can cancel $\Delta(0) = \det\bJ$ as
\begin{align}
    \Delta(t)/\Delta(0) = \det\!\Big( \b{I} + \b{X}\Big).
\end{align}
Let $\lambda_i$ be the eigenvalues of $\b{X}$. Then the eigenvalues of $\b{I}+\b{X}$ are $1+\lambda_i$, and the determinant is the product of eigenvalues
\begin{align}
    \hspace{-.6em}\det\!\Big( \b{I} + \b{X}\Big) \!&= \!\prod\nolimits_i (1+\lambda_i) = 1 + \sum\nolimits_i \!\!\lambda_i + \!\sum\nolimits_{i<j} \!\!\lambda_i \lambda_j + \dotsm \nonumber\\
    &= 1 + \tr\b{X} + \tfrac{1}{2}\Big[ (\tr\b{X})^2 - \tr(\b{X}^2) \Big] + \dotsm,
\end{align}
where the second line simplifies using the identities \mbox{$\tr\b{X} = \sum_i\lambda_i$}, \mbox{$\tr(\b{X}^2) = \sum_i \lambda_i^2$}, and \mbox{$\big(\sum_i \lambda_i\big)^2 = \sum_i \lambda_i^2 + 2\sum_{i<j}\lambda_i\lambda_j$}.
Dropping higher-order terms leaves the quadratic model
\begin{align}\label{eq:detmodel}
    p_2(t) = \Delta(0)\,\big(1 + c_1 t + c_2 t^2\big),
\end{align}
where
\begin{align}
    c_1 &= \tr\,\b{M}_1, &
    c_2 &= \tfrac{1}{2} \big( \tr\b{M}_2 + (\tr\b{M}_1)^2 - \tr(\b{M}_1^{\,2}) \big).
\end{align}
As nothing in this derivation depends on expanding about \mbox{$t=0$}, the determinant derivatives generalize, with \mbox{$\b{M}_m(t) \triangleq \b{A}(t)^{-1}\,\b{A}^{(m)}(t)$}, to
\begin{align}\label{eq:detderiv}
    \hspace{-.3em}\Delta^\prime(t) &= \Delta(t)\, \tr\b{M}_1(t), \nonumber\\
    \hspace{-.3em}\Delta^{\prime\prime}(t) &= \Delta(t) \Big[ \tr\b{M}_2(t) + (\tr\b{M}_1(t))^2 - \tr(\b{M}_1(t)^2) \Big],
\end{align}
and $p_2$ is thus the second-order Taylor polynomial of $\Delta$ about $t=0$, with $\Delta^\prime(0) = \Delta(0)\,c_1$ and $\Delta^{\prime\prime}(0) = 2\,\Delta(0)\,c_2$.
Taylor expansions of the minors of $\bJ$, $\det\bJ$ included, are known to arbitrary order \cite{mullerHigherOrderKinematics2019}, where they serve offline singularity analysis. Here, the same second-order model is used online, as a step limiter.

The front model takes the nearest predicted singularity crossing $t^\star$ to be the smallest positive root of \cref{eq:detmodel}, and limits the step to
\begin{align}\label{eq:alpha_model}
    \hspace{-.6em}\alpha &= \min\big(\alpha_{\rm cap}, \, \beta_{\rm fm} \, t^\star\big), &
    \alpha_{\rm cap} &= \min\big(1, \, \theta_{\rm max} / \|\dq\|\big),
\end{align}
where $\beta_{\rm fm} < 1$ is a safety factor, and $\alpha_{\rm cap}$ is a fixed cap bounding the norm of the joint step to $\theta_{\rm max}$. The cap is independent of the determinant, and guards against a step long enough to cross into an adjacent aspect and back out again. \Cref{fig:detroot} shows a sample step of this method on the determinant surface of $q_2, q_3$ on a 6-DOF robot: the quadratically approximated root $t^\star$ (red) predicts the true root $t_\times$ with some error, and the step $\beta_{\rm fm} t^\star$ is then backed off further to account for model errors. Two positive roots of $p_2$ mean the step dips through a boundary and returns; the smaller root is then used. No positive real root means no singularity is predicted in front of the step, and the step is limited by the cap $\theta_{\rm max}$ alone.

\begin{figure}[t!]
	\includegraphics[width=1\columnwidth, trim={.95cm .35cm 1.05cm .9cm},clip]{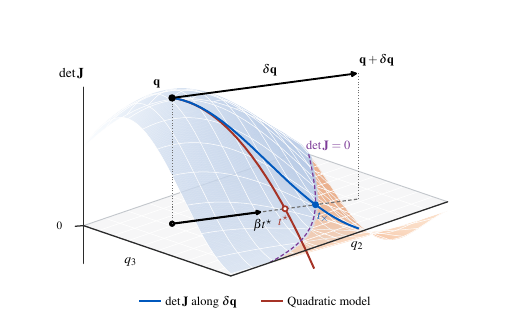}
	\caption{Visualization of the quadratic determinant model on the $q_2$, $q_3$ surface of the KUKA KR6 determinant, near a singularity, for a sample step vector $\dq$. The true singularity crossing $t_\times$ is approximated by the quadratic model at $t^\star$, then the actual step taken is reduced further to $\beta_{\rm fm} t^\star$.}
	\label{fig:detroot}
\end{figure}

\subsection{Hermite sign-change detection and backtracking}\label{sec:limiter_hermite}

The quadratic front model predicts a crossing from derivatives at $t=0$ alone, and its error grows along the step. In this section, the step $\alpha$ from \cref{eq:alpha_model} is verified before it is taken: first by inexpensive checks that clear most steps, and then, when these fail, by a more accurate backtracking model.

First, the step is probed by evaluating the kinematics at $\bq + \alpha\dq$ and computing $\Delta(\alpha)$. If the step is accepted, this probe is free, since its forward kinematics and Jacobian are reused in the next iteration. If $\Delta(\alpha)$ and $\Delta(0)$ differ in sign, a singularity has been crossed and the step must be shortened. If they agree, the test is inconclusive: either no singularity was crossed, or an even number were. To resolve this, let $u = t/\alpha$ be a rescaled step coordinate that places the probe at $u=1$. The value, slope, and curvature of $\Delta$ at $u=0$, along with its value at $u=1$, define a unique cubic Hermite interpolant~\cite{stoerIntroductionNumerical2002}
\begin{align}
    p_3(u) &= a_0 + a_1 u + a_2 u^2 + a_3 u^3,\hspace{-6em}\nonumber\\
    \label{eq:hermite3}
    a_0 &= \Delta(0), &
    a_1 &= \alpha\,\Delta^\prime(0), \nonumber\\
    a_2 &= \tfrac{1}{2}\,\alpha^2\,\Delta^{\prime\prime}(0), &
    a_3 &= \Delta(\alpha) - a_0 - a_1 - a_2,
\end{align}
where $\Delta(0)$, $\Delta^\prime(0)$, and $\Delta^{\prime\prime}(0)$ are already available from the front model. The first three terms of $p_3$ are exactly $p_2(\alpha u)$, and the cubic term warps the front model to pass through the measured endpoint, $p_3(1) = \Delta(\alpha)$. If $p_3$ has no root on $u \in (0, U]$, the step is accepted. We set $U = 1/\beta_{\rm bt}$ on the first probe and $U = 1$ thereafter, so that the first fit extrapolates past the probe and a crossing beyond the step is still shortened.

However, if the signs differ, or if $p_3$ has a root, the interpolant is refined with the slope at the probe, computed from \cref{eq:detderiv} at $t=\alpha$. The five endpoint conditions then define the unique, more accurate quartic Hermite interpolant
\begin{align}\label{eq:hermite4}
    p_4(u) &= a_0 + a_1 u + a_2 u^2 + (4 a_3 - b)\,u^3 + (b - 3 a_3)\,u^4, \nonumber\\
    b &= \alpha\,\Delta^\prime(\alpha) - a_1 - 2 a_2,
\end{align}
with $a_0,\dots,a_3$ as in \cref{eq:hermite3}, such that $p_4(1) = \Delta(\alpha)$ and $p_4^\prime(1) = \alpha\,\Delta^\prime(\alpha)$. If $p_4$ has no root on $(0, U]$, the step is accepted. Otherwise, let $u_k^\times$ be the first root of $p_4$ at probe $k$, and shorten the step to
\begin{align}\label{eq:backoff}
    \alpha_{k+1} = \beta_{\rm bt} \, \min(u_k^\times, 1) \, \alpha_k,
\end{align}
where $0 < \beta_{\rm bt} < 1$ is the backtracking factor, and the clamp prevents the step from lengthening if $U>1$ on the first probe. The process then repeats with a new probe at $\alpha_{k+1}$, and terminates in one of three ways: a probe is accepted and its kinematics are carried into the next iteration; the probe budget is exhausted and the step is taken unverified; or $\alpha$ falls below a floor $\alpha_{\rm min}$ and the solver declines to answer.

\Cref{fig:hermite} shows the full limiter sequence on an unlucky sample step of the QuIK4 algorithm on a solve of the KR6 manipulator. In this example, the front model estimate is poor, and the first probe is placed beyond the singular crossing at $\alpha_1 = \beta_{\rm fm} t^\star$. As $\Delta$ dips through zero and back before $\alpha_1$, the sign of $\Delta(\alpha_1)$ matches $\Delta(0)$. Accordingly, the cubic interpolant $p_3$ is formed, which misses the dip but still fails on $[0, U]$, just past the probe in the look-ahead window. The slope at the probe is then used to refine the fit, forming $p_4$, which correctly locates the dip at $\alpha_1 u_1^\times$. The step backs off to $\alpha_2 = \beta_{\rm bt} u_1^\times \alpha_1$, where the second round forms a cubic (not shown) that has no root, and the step is accepted.

The limiter adds little cost to an iteration. The front model requires one sweep of \cref{eq:leibniz} through to $\Dm{3}\dqp{2}$, one LU factorization, and two solves, costing approximately \qty{0.37}{\micro\second} per step on the KR6 in testing. An accepted probe requires only a further determinant computation, at \qty{0.08}{\micro\second}, and the first probe is accepted on about 99.9\% of steps (with error level $\sigma = \pi/4$ and joint cap $\theta_{\rm max} = 0.5\,\si{rad}$). A rejected probe requires additional computations, costing an additional \qty{0.28}{\micro\second} on average, but these computations are only required when a step would otherwise cross or approach a boundary.

This limiter acts, in effect, as a form of damping, but one that is applied surgically. Damped least-squares methods and their adaptive variants \cite{nakamuraInverseKinematic1986, wamplerManipulatorInverse1986, bussSelectivelyDamped2005, chiaveriniReviewDamped1994} judge proximity to a singularity by the conditioning of $\bJ$ at the current iterate, and shorten or warp the step when it is poor, regardless of the step's direction. The limiter proposed here instead predicts how the determinant changes along the step, and shortens it only when it approaches a boundary: a step from a well-conditioned pose toward a singularity is shortened, while one at an ill-conditioned pose facing away from the singularity is left intact. The step direction is always preserved, and only its length is reduced, by as much as the predicted distance to the boundary requires. The limiter is also faster: adaptive damping variants typically require a singular value decomposition every iteration, on average about twice as expensive (\qty{0.95}{\micro\second} versus \qty{0.45}{\micro\second} on an $n=6$ chain).

\begin{figure}[t!]
    \includegraphics[width=1\columnwidth,trim={0.2cm 0.27cm 0.15cm 0.2cm},clip]{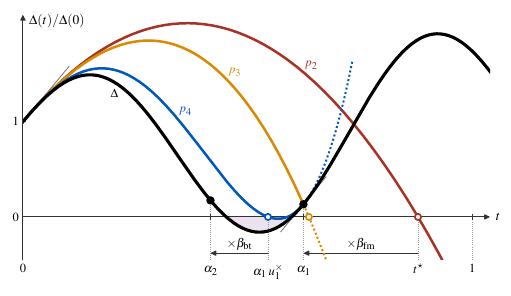}
    \caption{Hermite sign-change detection on an unlucky KUKA KR6 step where the front model prediction is inaccurate. The quadratic estimator $p_2$ limits the first probe to $\alpha_1$, where the cubic $p_3$ fails. The quartic $p_4$ locates the crossing at $\alpha_1 \, u_1^\times$, and the step backs off to the accepted $\alpha_2$.}
    \label{fig:hermite}
\end{figure}

Although accurate, the limiter is designed for speed, and does not certify aspect preservation: a pair of closely spaced crossings between probes can escape the interpolants. The following section addresses the problem of real-time certification more directly.

\section{Aspect Certification and Restarts}\label{sec:aspect_certify}

The limiter of \cref{sec:limiter} judges each step quickly from local data, and a solve that slips past it returns an answer in another aspect that otherwise appears correct. In this section, we instead check the answer \textit{after} the solve. Once a solve converges to $\bq^\star$, the answer can be certified by showing that the chord from the seed $\bq_0$ to $\bq^\star$ never crosses $\det\bJ = 0$. The converse does not hold: $\bq^\star$ can lie in the same aspect as $\bq_0$ while the chord leaves the aspect and re-enters it, in which case certification refuses a correct answer. A refusal can be recovered from, however, and \cref{sec:restarts} introduces \textit{aspect-certified restarts} to recover solves that failed, were refused, or converged to a wrong aspect.

\subsection{Certifying the chord with a Hermite trust band}\label{sec:certify}

Let $\Delta(t) \triangleq \det \bJ\big(\bq_0 + t\,(\bq^\star - \bq_0)\big)$, with $t \in [0,1]$, now denote the determinant along the chord. Its derivatives follow from \cref{eq:detderiv} with $\dq = \bq^\star - \bq_0$. If $\Delta(0)$ and $\Delta(1)$ differ in sign, the chord crosses a singularity and the answer is refused directly. If they agree, then as before the chord either stays nonsingular throughout, or crosses an \textit{even} number of times. This latter case, a \textit{hidden} crossing, is more challenging: nearly all hidden crossings arise where the chord passes close to the intersection of two singular surfaces, and the resulting excursion from the aspect can be arbitrarily narrow, such that fixed sampling density cannot catch them all. Rather, we need a method fast enough to run in real time, but accurate enough that off-aspect answers are not accepted.

We reuse the Hermite interpolation of \cref{sec:limiter_hermite}, but now apply it for accuracy rather than speed, to search $\Delta(t)$ for excursions from the aspect. On an interval $[t_{\rm L}, t_{\rm R}]$ of width $h = t_{\rm R} - t_{\rm L}$, with normalized position $u = (t - t_{\rm L})/h \in [0,1]$, the value, slope, and curvature of $\Delta$ at both ends define a unique quintic Hermite interpolant $p_5(u)$, the two-sided counterpart of $p_3$ and $p_4$,
\begin{align}\label{eq:hermite5}
    p_5(u) &= a_0 + a_1 u + a_2 u^2 + (10 a_3 - 4 b + d)\,u^3 \nonumber\\
    &\phantom{=}\ + (7 b - 15 a_3 - 2 d)\,u^4 + (6 a_3 - 3 b + d)\,u^5,
    \nonumber\\
    d &= \tfrac{1}{2}\,h^2\,\Delta^{\prime\prime}(t_{\rm R}) - a_2,
\end{align}
where $a_0,\dots,a_3$ and $b$ are as in \cref{eq:hermite3,eq:hermite4}, with derivatives taken at $t_{\rm L}$ and $t_{\rm R}$ in place of $0$ and $\alpha$, and scaled by $h$ in place of $\alpha$. The interpolation error is known:~\cite[Sec.~2.1.5]{stoerIntroductionNumerical2002}: for each $u \in [0,1]$ there is some $\xi \in [t_{\rm L}, t_{\rm R}]$ such that
\begin{align}\label{eq:hermite_remainder}
    \big|\Delta(t_{\rm L} + h u) - p_5(u)\big| &= \frac{h^6}{6!} \, u^3 (1-u)^3 \, \big|\Delta^{(6)}(\xi)\big|,
\end{align}
which vanishes at both ends, peaks at the midpoint, and is reduced by a factor of $2^6$ each time the interval is halved. This expression cannot be used directly, since the sixth derivative $\Delta^{(6)}$ is impractical to compute along the chord. Instead, the interpolation error is measured at a single point and used to calibrate \cref{eq:hermite_remainder}, assuming that $\Delta^{(6)}$ is roughly constant over the interval. The determinant is evaluated through the robot kinematics at the minimum $u_{\rm m}$ of $p_5$, or at the midpoint if that minimum lies at an endpoint. A sign change there proves a crossing, and the chord is refused. Otherwise, the residual
\begin{align}\label{eq:remainder}
    e_{\rm m} = |\Delta(t_{\rm L} + h u_{\rm m}) - p_5(u_{\rm m})|
\end{align}
calibrates the error scale of \cref{eq:hermite_remainder}, giving the estimate
\begin{align}
    e(u) &= \eta \, u^3 (1-u)^3,  &
    \eta &= e_{\rm m} / \big(u_{\rm m}(1-u_{\rm m})\big)^3,
\end{align}
so that $e(u_{\rm m}) = e_{\rm m}$. Taking $\Delta > 0$ at the ends without loss of generality, the interval is accepted if
\begin{align}\label{eq:trust}
    p_5(u) - \kappa e(u) &> 0, & u &\in [0, 1],
\end{align}
that is, if $p_5$ stays positive by a margin of at least $\kappa$ times the estimated error, where $\kappa \geq 1$ is a safety factor on the error in the estimate itself. The left side of \cref{eq:trust} is a sextic that is positive at both ends, so the test reduces to checking that it has no real root on $[0,1]$. If no real roots exist, the interval is certified. Otherwise, the interval is split at $u_{\rm m}$, where $\Delta$ is already known, and each part is tested in turn. The sub-intervals are cheaper than the first: both of their endpoints already have a determinant and Jacobian factorization, and at least one endpoint already has its derivatives computed. The process repeats until every interval is accepted, a sign change is found, or the evaluation budget is exhausted.

\begin{figure}[t!]
    \includegraphics[width=1\columnwidth,trim={0.1cm 0cm 0cm 0cm},clip]{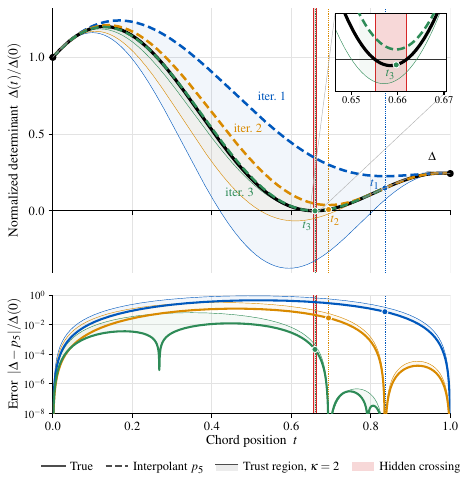}
    \caption{Hermite certification of a KUKA KR6 chord with a hidden crossing $0.007$ wide. Top: $\Delta(t)$ and the quintic interpolant $p_5$ of each iteration (dashed), shaded down to its trust band $p_5 - \kappa\, e(u)$, with $\kappa = 2$. Each iteration evaluates $\Delta$ at the minimum $t_k$ of $p_5$ and splits there; $\Delta(t_3) < 0$ shows the crossing. Bottom: the true interpolation error of each iteration and trust band estimate.}
    \label{fig:certify}
\end{figure}

\cref{fig:certify} shows this procedure on a \qty{3}{rad} chord of the KR6, whose determinant has a hidden crossing only $0.007$ wide and $10^{-4}\Delta(0)$ deep. A uniform grid would require about 150 samples to land reliably inside the excursion; the Hermite certifier locates it in three. The first interpolant misses the dip, with its minimum at $t_1 = 0.84$, but its trust band falls far below zero, so the interval is split. The sign test at $t_1$ passes, and the interpolant on $[0, t_1]$ is an order of magnitude more accurate, but its trust band still does not clear the interval, so it is split again at its minimum $t_2 = 0.69$. The third interpolant, on $[0, t_2]$, lies within about 1\% of the true curve throughout and places its minimum inside the excursion, where the evaluation at $t_3 = 0.66$ fails the sign test and proves a crossing.
We use $\kappa = 2$ here for illustration. In practice, a larger safety factor provides higher accuracy. In this work, we use $\kappa=16$ throughout, which reproduces these three iterations identically.

The trust test is a heuristic, not a proof, as $\eta$ is estimated from a single residual sample. In practice, however, it is highly accurate. \cref{fig:certify_front} compares the proposed certifier against six alternatives on $10^6$ random chords of the KR6 that cross a singular surface and $10^6$ that do not, drawn in equal numbers at lengths of 0.1, 0.3, 1, and \qty{3}{rad}. A certifier can err in two ways: a \textit{miss}, when a crossing chord is accepted, or a \textit{false alarm}, when a non-crossing chord is refused, including when its evaluation budget runs out. \textbf{H5 trust} is the method described above. \textbf{D1} samples the sign of $\Delta$ on a uniform grid of $N$ points. \textbf{D2} bounds $\Delta$ between samples with an adaptive Lipschitz envelope \cite{piyavskiiAlgorithmFinding1972, shubertSequentialMethod1972}, taking the slope bound as a safety factor times the larger slope at the interval's ends. \textbf{C1} and \textbf{C2} mirror \textbf{D1} and \textbf{D2}, but act on the smallest singular value $\sigma_{\min}$ of $\bJ$ rather than on the determinant, and flag any sample below a floor of $10^{-5}$. \textbf{CP} fits Chebyshev interpolants of $\Delta$ of increasing degree and locates their roots \cite{boydComputingZeros2002}. Finally, \textbf{H5 plain} accepts an interval whenever $p_5$ stays positive, omitting the trust band of \cref{eq:trust}.

\begin{figure}[t!]
    \includegraphics[width=1\columnwidth,trim={0.07cm 0.1cm 0.1cm 0cm},clip]{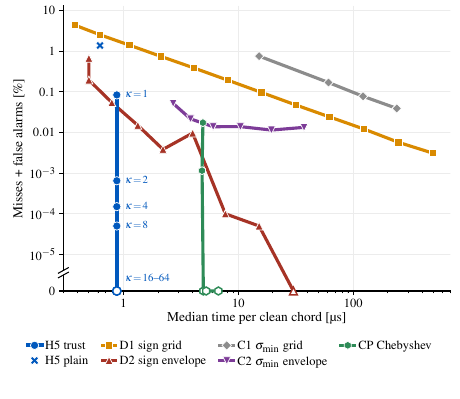}
    \caption{Error rate against computation time for chord certifiers on the KUKA KR6, over $10^6$ random chords that cross a singularity, and $10^6$ that do not. An error is a missed crossing or a refused non-crossing chord. Each curve sweeps the main safety parameter of one method; H5 trust sweeps $\kappa$. Times are medians over non-crossing chords, the cost of accepting a correct answer, measured on C++ implementations of every method. Open markers on the axis indicate no errors observed.}
    \label{fig:certify_front}
\end{figure}

The H5 trust algorithm is the clear winner, outperforming all others in speed and accuracy, with no errors observed for $\kappa\geq16$ and a median time of only \qty{0.88}{\micro\second}. Of the other methods, only CP and D2 also reach zero errors, at $5.7\times$ and $35\times$ the cost. C2 levels off near 0.01\%, and C1 approaches the same level, since their floor also refuses chords that merely pass close to a singularity. D1 still misses 62 crossings per million at $N = 4096$, as a uniform grid cannot resolve arbitrarily narrow excursions. The trust band makes the H5 interpolant viable: without it, H5 plain misses 2.8\% of crossings. Raising $\kappa$ improves reliability at almost no cost: at $\kappa = 16$, 98.7\% of non-crossing chords are cleared by a single interior evaluation and none needs more than nine, and raising $\kappa$ from 1 to 64 increases the mean number of evaluations on such chords only from 1.00 to 1.11. A larger $\kappa$ thus increases the work on only the few chords that need it, and as the error in \cref{eq:hermite_remainder} decreases by approximately $2^6$ at every subdivision, the number of iterations remains largely unchanged. 

\subsection{Aspect-certified restarts}\label{sec:restarts}

Certification can turn a wrong answer into a refusal, but it cannot produce a correct answer by itself. A refused or unconverged attempt may instead be restarted from a new seed $\tilde{\bq}$. The idea of restarts in numerical inverse kinematics is not new, typically implemented via random reseeding~\cite{beesonTRACIK2015}. However, by leveraging the efficient aspect certification of the previous section, we propose \textit{aspect-certified restarts}, in which each new seed is itself certified to lie in the original seed's aspect, and the final solution is certified in the same way.

To form a new seed, we step out from $\bq_0$ along a ray
\begin{align}\label{eq:restart}
    \tilde{\bq}_k &= \bq_0 + s_k \, \b{d}, &
    s_k &= s_1 \, \gamma^{\,k-1} \big( \tfrac{1}{n}\|\b{d}\|_1 \big)^{-1},
\end{align}
where $\b{d}$ is the unit direction of the ray and $k$ is the attempt count. The normalization follows \cref{eq:sampling}, such that $s_1 \gamma^{k-1}$ is the mean absolute joint perturbation, which starts at $s_1$ and grows by a factor $\gamma$ with each attempt. Each new segment $\tilde{\bq}_{k-1} \to \tilde{\bq}_k$, with $\tilde{\bq}_0 = \bq_0$, is certified with the method of \cref{sec:certify}; since the segments are collinear, together they certify the whole ray from $\bq_0$ to $\tilde{\bq}_k$. In this paper, the first ray follows the steepest descent on the error at $\bq_0$, $\b{d} \propto -\bJ^\transpose \b{e}$, which is inexpensive and was found to outperform a random seed direction. A ray that fails certification is discarded and replaced by a random direction.

\begin{figure}[b!]
	\includegraphics[width=1\columnwidth,trim={1cm 0.1cm 1.2cm 0cm},clip]{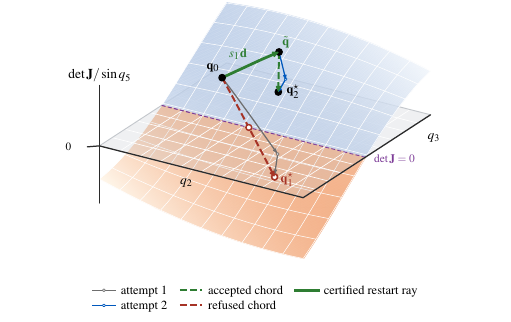}
    \caption{A certified restart on the KUKA KR6, using the plain QuIK solver with no step limiter. The surface plots $\det \bJ / \sin q_5$ over the $(q_2, q_3)$ plane, isolating the shoulder singularity. The first attempt from $\bq_0$ crosses the singularity and converges to $\bq^\star_1$ in another aspect, which is refused. The restart segment $\bq_0 \to \tilde{\bq}$ is certified, and the second attempt converges to the same-aspect solution $\bq^\star_2$, whose chord $\tilde{\bq}\to\bq_2^\star$ is certified.}
	\label{fig:restart}
\end{figure}

The full procedure runs as follows. The first attempt starts from $\bq_0$. If it converges to $\bq^\star$, the chord $\bq_0 \to \bq^\star$ is certified, and a clean chord returns the answer. Otherwise, whether the solver attempt failed or its certification was refused, a new seed is formed: the next candidate $\tilde{\bq}_k$ on the ray is computed, and the new segment $\tilde{\bq}_{k-1} \to \tilde{\bq}_k$ is certified. If the segment is clean, $\tilde{\bq}_k$ becomes the new seed. Otherwise, the ray is discarded and replaced with a random direction, and a new seed is computed and certified. The new attempt runs from this seed, and the process repeats until an answer is accepted or until a maximum number of restarts is reached.

A subtle case is a correct answer whose chord leaves and re-enters the aspect. The certifier correctly refuses the chord, as it exits the aspect, but the answer is nonetheless correct. Such refusals are handled by the procedure above: a restart from a certified seed $\tilde{\bq}_k$ may converge to the same $\bq^\star$, whose chord from the new seed can certify cleanly. In practice, these false refusals are rare: on the KR6, over $10^6$ solves with plain QuIK at $\sigma = 0.3$, certification refused a correct first answer 2\,144 times (0.22\%). Each false refusal was on a chord that leaves and re-enters the aspect, and with restarts, the solver recovered all but 44.

\Cref{fig:restart} illustrates the procedure on the KR6. The first attempt from $\bq_0$ crosses into another aspect and is refused. The restart to $\tilde{\bq}$ is certified, and the second attempt converges to $\bq^\star_2$; the two certified segments $\bq_0 \to \tilde{\bq}$ and $\tilde{\bq} \to \bq^\star_2$ compose into a certified path from $\bq_0$ to $\bq^\star_2$.

\section{Extension to Redundant Arms}\label{sec:redudant}

With $n>6$ joints, the solutions of $\b{e}(\bq) = \b{0}$ form an $r$-dimensional self-motion manifold, $r = n-6$, and $\bJ \in \R^{6\times n}$ has no determinant. 
Its rank-deficient set has codimension $r+1 \geq 2$, so a generic path misses singularities with probability one, and it does not divide the joint space into aspects. A redundant arm therefore does not have aspects to preserve, and the safeguards of \cref{sec:limiter,sec:aspect_certify} do not apply. The solvers of \cref{sec:solvers} can, however; this section summarizes a method for doing so.

Applied directly to a redundant arm, with $\setminus$ taken as the minimum-norm solution, \cref{eq:taylorstep} converges to a point on the self-motion manifold determined by the seed rather than by any criterion, and is therefore not repeatable. The extra freedom is typically used for a secondary goal, such as staying near a nominal configuration and away from the joint limits, or maintaining a well-conditioned pose. We resolve it through the \textit{extended Jacobian} \cite{baillieulKinematicProgramming1985,kleinNewFormulation1995}. A secondary objective $f(\bq)$ is chosen, whose stationary points on the self-motion manifold satisfy $\nabla\! f = \bJ^\transpose\blambda$ for some $\blambda \in \R^6$. Choose six joints $\mathcal{B}$ with $\bJ_\mathcal{B}$ invertible and let $\mathcal{C}$ be the other $r$; this gives $r$ equations
\begin{align}\label{eq:chart}
    \blambda &= \bJ_\mathcal{B}^\transpose \setminus \nabla\! f_\mathcal{B}, &
    \b{h} &= \nabla\! f_\mathcal{C} - \bJ_\mathcal{C}^\transpose \, \blambda = \b{0},
\end{align}
whose solution does not depend on the choice of $\mathcal{B}$. Their Jacobian is
\begin{align}\label{eq:dh}
    \tfrac{\partial \b{h}}{\partial \bq} &= \b{\Phi}_\mathcal{C} - \bJ_\mathcal{C}^\transpose \, \big(\bJ_\mathcal{B}^\transpose \setminus \b{\Phi}_\mathcal{B}\big), &
    \b{\Phi} &= \nabla^2 \! f - \blambda^\transpose \H,
\end{align}
where 
$\blambda^\transpose\H$ denotes the contraction of the kinematic Hessian $\H$ along its six rows. Then, stacking $\b{e}$ and $\b{h}$ gives a square augmented system
\begin{align}\label{eq:augmented}
    \b{e}_{\rm aug} &= \begin{bmatrix} \b{e} \\ \b{h} \end{bmatrix}, &
    \bJ_{\rm aug} &= \begin{bmatrix} \bJ \\ \partial\b{h}/\partial\bq \end{bmatrix} \in \R^{n\times n}.
\end{align}

Expanding all rows to order $p$ to achieve $p+1$ convergence in the higher-order methods of \cref{sec:solvers} would require the derivatives of $\b{\Phi}$ along the step, which in turn requires $\Dm{p+1}$ to be computed explicitly with a free joint index. Since the pose is the primary task and $\b{h}$ only selects among valid solutions, we expand the pose rows alone, and solve the system
\begin{align}\label{eq:redundant_step}
    \dq_{(p)} &= -\begin{bmatrix} \Aop{p} \\ \partial\b{h}/\partial\bq \end{bmatrix} \setminus \begin{bmatrix} \b{e} + \b{c}_p \\ \b{h} \end{bmatrix},
\end{align}
with $\Aop{p}$ and $\b{c}_p$ as in \cref{eq:taylorstep,eq:quik_corr}. The pose error then retains order-$(p+1)$ convergence, and $\b{h}$ converges quadratically. 

We use two secondary objective functions:
\begin{enumerate}
    \item \textit{Objective I} keeps the arm near a nominal configuration $\b{q}_{\rm nom}$ chosen to be well-conditioned, or far from the joint limits. Then,
    \begin{align}
        f = \tfrac12(\bq-\bq_{\rm nom})^\transpose\b{W}(\bq-\bq_{\rm nom}),
    \end{align}
    with trivial derivatives
    \begin{align}
        \nabla\! f &= \b{W}(\bq-\bq_{\rm nom}), & 
        \nabla^2\! f &= \b{W},
    \end{align}
    where $\b{W}$ is a symmetric positive definite weighting matrix that typically scales each joint by its range. With $\bq_{\rm nom}$ at mid-range and a diagonal $\b{W}$, this is the classical joint-limit criterion of \cite{liegeoisAutomaticSupervisory1977}.

    \item \textit{Objective II} maximizes the manipulability \mbox{$w = \sqrt{\det\bJ\bJ^\transpose}$} \cite{yoshikawaManipulability1985}. We minimize \mbox{$f = -\log w$}, which has the same stationary points but better-behaved derivatives
    \begin{align}
        \nabla\! f &= -\frac{\nabla w}{w}, &
        \nabla^2\! f &= -\frac{\nabla^2 w}{w} + \frac{\nabla w\,\nabla w^\transpose}{w^2}.
    \end{align}
    These quantities follow from the closed-form gradient and Hessian of $w$ given in \cite{kimAnalyticalComputation2025}, which require only $\bJ$, $\H$, and $\Dm{3}$ at the current iterate.
\end{enumerate}

A root of the system is a stationary point of $f$, which can be a maximum, minimum, or saddle. From an arbitrary seed, iteration with \cref{eq:redundant_step} will often land on a saddle. It is therefore standard \cite{kleinNewFormulation1995} to warm-start the method by first performing first-order null-space descent with step size $\mu$ \cite{liegeoisAutomaticSupervisory1977}
\begin{align}\label{eq:liegeois}
    \dq &= -\bJ^+\b{e} - \mu\,(\b{I}-\bJ^+\bJ)\,\nabla\! f, &
    \bJ^+ &= \bJ^\transpose(\bJ\bJ^\transpose)^{-1}.
\end{align}
This requires no second derivatives and converges reliably to a minimum rather than a saddle, but more slowly and at higher cost. Subsequent solves along a trajectory are already in the basin, and do not require this initial step.

The full expansion is possible but costly, with limited practical benefit. \Cref{tab:redundantcost} compares the three strategies per iteration for 7-DOF and 16-DOF chains, and per solve on the 7-DOF KUKA iiwa at $\sigma = 0.01$\,rad, reproducing one step of trajectory tracking. The second-order expansion preserves the higher-order convergence of $\b{e}$ at a modest cost: \qty{0.66}{\micro\second}/iteration for QuIK at $n=7$ against \qty{0.34}{\micro\second} pose-only, and \qty{1.9}{\micro\second} per tracking solve, well within a real-time budget. The full-order expansion matches the convergence order of $\b{h}$ to that of $\b{e}$ and saves up to one iteration, but the per-iteration cost grows steeply with both $p$ and $n$, and the total solve time is worse in every case. Null-space descent converges linearly in $\b{h}$ and needs 27 iterations and significantly more time, but is necessary to avoid saddle points when no prior solution is available. The library implements the second-order expansion for every step method, and can warm-start the first solve of a trajectory with null-space descent.

\begin{table}[t!]
	\caption{Cost of resolving the redundancy: one iteration, and one tracking step.}
	\label{tab:redundantcost}
	\footnotesize
	\renewcommand{\arraystretch}{1.3}
	\setlength{\tabcolsep}{3pt}
	\begin{tabularx}{\columnwidth}{
		@{}c@{\hspace{1.5em}}
		X@{\hspace{1em}}
		r@{\hspace{1em}}
        r@{\hspace{2.5em}}
        r@{\hspace{1em}}
        r@{\hspace{3em}}
        r@{\hspace{1.75em}}
        r@{}
	}
		& & \multicolumn{2}{r@{\hspace{2.5em}}}{\textbf{Order}} &
            \multicolumn{2}{@{}r@{\hspace{3em}}}{\textbf{Iteration [\si{\micro\second}]}} & 
            \multicolumn{2}{r@{}}{\makebox[0pt][r]{\textbf{Solve ($\sigma\!=\!0.01$)}}} \\
		& & $\b{e}$ & $\b{h}$ & $n=7$ & $n=16$ & Iter. & [\si{\micro\second}] \\
		\toprule
		\input figures/tab_redundant_cost_lib_v2
		\bottomrule
	\end{tabularx}
	\\[1em]
	\footnotesize Note: Objective~I ($\b{W} = \b{I}$). \emph{Order}: measured convergence order of the pose error $\b{e}$ and of the optimality condition $\b{h}$ on the 7-DOF KUKA iiwa; the full-order block is a prototype, not in the library. \emph{Iteration}: the time of one iteration. \emph{Solve}: mean iterations and time of a solve at $n=7$ at $\sigma=0.01$, reproducing one trajectory-tracking step.
\end{table}

\section{Experimental Validation}\label{sec:validation}

The solvers and safeguards of \cref{sec:solvers,sec:limiter,sec:aspect_certify} are implemented in C++ with Eigen~3.4 and released as a library with ROS~2 integration. The implemented solvers are highly optimized, leveraging the efficient derivative evaluations from \cref{sec:derivs} and caching per-step quantities (kinematics, factorizations, and derivative contractions) for reuse across the solver, limiter, and certifier. The nominal QuIK method is implemented as in \cite{lloydFastRobust2022}, but also integrates a tunable joint-step limit that bounds the step norm to $\theta_{\rm max}$ as in \cref{eq:alpha_model}.
 
In this section, we evaluate the assembled solver on random problems, sampled as in \cref{eq:sampling}, under a hard deadline~$\tau$, scored by failure rate: the fraction of solves that do not return a same-aspect answer within~$\tau$. These failures can be further subdivided into \textit{refusals}, where the solver is unable to converge or refuses to answer, and \textit{wrong-aspect answers}, where the solver returns an answer in a different aspect as if correct. A refusal is visible to the caller, which can hold its previous setpoint safely; a wrong-aspect answer is a valid-looking configuration whose execution drives the arm through a singularity, at unbounded joint speed in the worst case, and is a far more dangerous outcome.
 
Testing is performed at two representative error levels: \mbox{$\sigma=1$\,rad}, representing a cold-start scenario, and $\sigma=0.01$\,rad, representing a trajectory-tracking step. Six deadlines are tested, from \qty{1}{\milli\second}, representing an appropriate share of the control period in 100--500\,Hz loops, down to \qty{10}{\micro\second}, representing servo rates of \num{1}--\qty{10}{\kilo\hertz} or higher, or lower-performance hardware at moderate rates. The KUKA KR6 is used throughout unless otherwise noted.

\subsection{Experimental setup}\label{sec:setup}

In practice, an inverse kinematics solver can be tuned for a given chain, deadline, or error level. At more permissive deadlines, a slower, more cautious convergence may be preferable, whereas at tighter deadlines, convergence must be fast and efficient. To this end, a tuning procedure is designed and packaged into the codebase, allowing users to fine-tune the methods to their situation. Accordingly, each data point in this study is tuned for its chain, method, deadline, and error level, such that each reported failure rate is the best found by the tuning for that configuration. Tuning fits a cost model
\begin{align}
    \hat{t} = \lambda_0 + \lambda_1 \, N_{\rm iter} + \lambda_2 \, N_{\rm probes} + \lambda_3 \, N_{\rm restart}
\end{align}
predicting the time of a solve from counted work. The deadline is then applied through the model rather than the CPU clock, such that the objective, the failure rate under the deadline, is deterministic and free of timing noise, and the tuning search can run in parallel on several cores. Parameters are optimized by a coordinate pattern search in two stages, on $2.5\cdot10^4$ and $10^5$ problems, and the best candidates are re-scored on a separate set of $2\cdot10^5$ problems. At $\sigma = 0.01$, these sizes rise to $10^6$, $2\cdot10^6$, and $2\cdot10^6$ to resolve the lower failure rates. Even so, near one failure in $10^6$, the selection set holds too few failures to rank candidates reliably, and better parameters may exist in that regime.

Every in-library solver tunes the joint-step cap $\theta_{\rm max}$ ($0.05$--$4$\,rad), a Cartesian clamp scale as in \cite{lloydFastRobust2022}, relative-improvement and gradient-failure tolerances, and the QuIK4/QuIK5 substituted-step margin $c$. The step limiter adds $\beta_{\rm fm}$, $\beta_{\rm bt}$, the probe budget, and the $\alpha$-floor $\alpha_{\rm min}$. Certified restarts add the retry count ($1$--$16$), initial perturbation size $s_1$, growth factor $\gamma$, and rays certified per retry. The following five prior methods are tested alongside the proposed safeguards:

\begin{itemize}
    \item \textbf{Selectively damped least squares (SDLS)}~\cite{bussSelectivelyDamped2005}, which damps each singular direction of the step separately; tuned parameter: $\gamma_{\rm max}$.
    \item \textbf{Adaptive damping}~\cite{nakamuraInverseKinematic1986,chiaveriniReviewDamped1994}, which raises the damping factor only when the smallest singular value of $\bJ$ drops below a threshold; tuned parameter: $\sigma_\varepsilon$.
    \item \textbf{DMH}~\cite{chenFifthOrderPOE2026}, the fifth-order damped modified Halley method, which follows a Halley step with a Newton correction using a frozen Hessian; reimplemented on this library's DH kinematics, as its released product-of-exponentials code was significantly slower, leading to an unfair comparison. Tuned parameters: damping and maximum iterations.
    \item \textbf{KDL}~\cite{smitsKDLKinematics,bruyninckxOpenRobot2001}, the Levenberg--Marquardt solver of the Orocos library, run as released and selected from maximum iterations of 20, 100, or 500.
    \item \textbf{TRAC-IK}~\cite{beesonTRACIK2015}, which runs a Newton--Raphson solver in parallel with an SQP formulation and returns the first answer; run as released but pinned to one core, selected from Speed or Distance mode.
\end{itemize}
SDLS and adaptive damping are proposed as damping schemes only; accordingly, they are implemented inside this library on each of NR, QuIK, QuIK4, and QuIK5, and the best at each point is plotted. Problem generation follows \cref{eq:sampling}, except where otherwise specified.

All timings are single-threaded C++ compiled with {\small\texttt{-O3}} optimizations, measured on a pinned core of a MacBook Pro with M5 Pro chipset. Each data point begins with at least $N = 10^6$ problems, and the sample size is then increased until 100 failures are observed, which gives a 95\% confidence interval of approximately $\pm 20\%$, or until $N = 10^9$. Since the test machine is not a real-time system, each failure at the deadline boundary is re-measured five times and counted only if it fails in the majority.

\subsection{Evaluation of the safeguards}\label{sec:safeguard_eval}

\Cref{fig:ladder} tests each safeguard level on the KR6 at \mbox{$\sigma = 1$} and $0.01$\,rad, with parameters tuned per deadline. Without safeguards, the failure rate levels off at 2.0\% for \mbox{$\sigma = 1$}, and at 0.018\% for \mbox{$\sigma = 0.01$}; the majority of these failures are wrong-aspect answers, which are not avoided by faster convergence.

The step limiter provides increasing benefit as the deadline grows. At $\tau = \qty{1}{\milli\second}$ and $\sigma = 1$, it reduces failures $8\times$ to 0.26\%; at $\sigma = 0.01$, it provides a $290\times$ improvement to $6.3\cdot10^{-5}$\%. Further adding certified restarts extends the gain where the time budget permits multiple attempts: at $\sigma = 1$ they reach 0.031\% ($65\times$ below nominal), and at $\sigma = 0.01$ fewer than one failure per $10^8$ problems from $\tau = \qty{125}{\micro\second}$ onward. At the tightest deadline, $\tau = \qty{10}{\micro\second}$, and $\sigma = 1$, there is not enough time for certification or extra attempts, so limiter+restarts fail more than the limiter alone; for $\tau\geq\qty{25}{\micro\second}$, they overtake it.

The step method (QuIK, QuIK4, or QuIK5) makes little difference without a safeguard, as higher-order steps do not prevent aspect switches. With the limiter, higher-order methods perform better: for $\tau=\qty{1}{\milli\second}$, QuIK4 fails $7.3\times$ less often than QuIK at $\sigma=1$, and $18\times$ less often at $\sigma=0.01$, as its faster convergence offsets the limiter's per-step overhead. QuIK5 shows similar gains, but with diminishing returns at its higher per-iteration cost.

\begin{figure}[t!]
    \includegraphics[width=1\columnwidth,trim={0.47cm 0.3cm 0.05cm 0.2cm},clip]{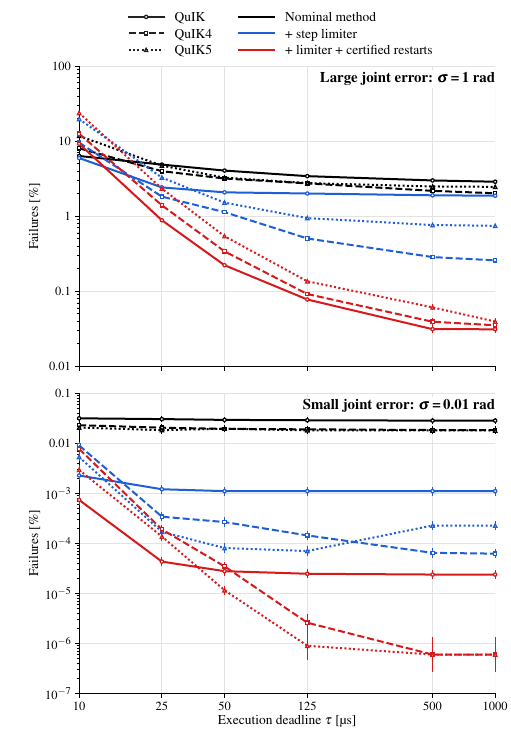}
    \caption{Failure rate on the KUKA KR6 against the solver deadline $\tau$, at large ($\sigma = 1$~rad, top) and small (\mbox{$\sigma = 0.01$}~rad, bottom) initial joint error. A failure is any solve that does not return a same-aspect answer within $\tau$. Each point uses the parameters tuned for its deadline and error level.}
    \label{fig:ladder}
\end{figure}

\subsection{Comparison with prior solvers}\label{sec:comparison}

\begin{figure}[p!]
    \includegraphics[width=1\columnwidth,trim={0.25cm 0.2cm 0.05cm 0.1cm},clip]{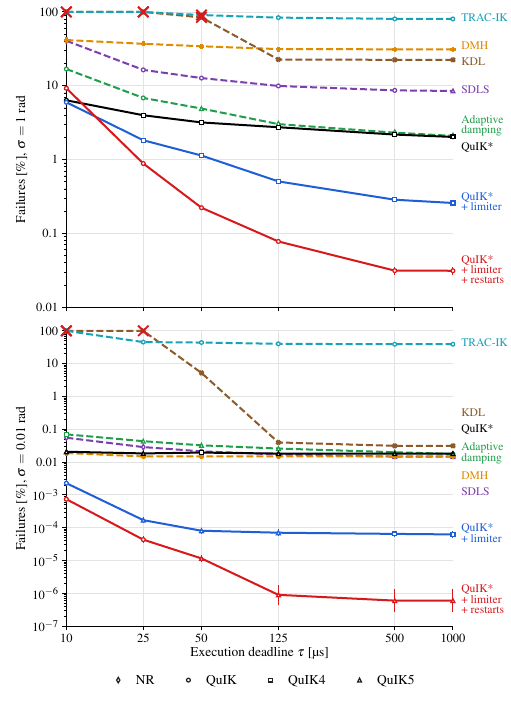}
    \caption{Failure rate on the KUKA KR6 against the solver deadline $\tau$ at $\sigma = 1$~rad (top) and $\sigma = 0.01$~rad (bottom), compared with prior methods. Solid lines are the safeguard levels of \cref{fig:ladder}, each at the best of QuIK, QuIK4, and QuIK5 at that point (denoted as \textit{QuIK*}); the marker shows which. Prior methods (dashed) are tuned per deadline and $\sigma$ for their own lowest failure rate; SDLS and adaptive damping run on each of Newton--Raphson (NR), QuIK, QuIK4, and QuIK5, and the marker shows the best. A red cross marks a deadline shorter than the method's median solve time.}
    \label{fig:sota}
\end{figure}

\Cref{fig:sota} compares the proposed safeguard levels against the prior solvers of \cref{sec:setup}. Solid lines plot the best of QuIK, QuIK4, and QuIK5 at each level; dashed lines show the baselines, each tuned to its own lowest failure rate.

At $\sigma = 1$, the prior methods fail far more often. KDL and TRAC-IK could not run reliably below $\tau = \qty{125}{\micro\second}$; even at longer deadlines, TRAC-IK failed on 80\% of problems and KDL on 23\%. DMH leveled off at 31\%, limited by convergence to the wrong solution basin rather than by speed. SDLS and adaptive damping reached 8.5\% and 2.1\%, but both performed worse on average than nominal (safeguard-free) QuIK: the singular-value decomposition required at each step is expensive, and both can stall the solver near ill-conditioned configurations, even when the step points away from the singularity.
At $\tau = \qty{1}{\milli\second}$, the limiter and limiter+restarts reach 0.26\% and 0.031\%, an $8\times$ and $67\times$ improvement over the best prior method (adaptive damping, 2.1\%). 

At $\sigma = 0.01$, aspect switching is rarer and deadlines are easier to meet, so every method fails less often. The prior methods level off at 0.015--0.03\% (TRAC-IK excepted), nearly all wrong answers, while the safeguards continue to improve, so the relative gap widens. The limiter reaches $6.3\cdot10^{-5}$\% ($240\times$ below SDLS, the best prior method), and with limiter+restarts only 6 failures were observed in $10^9$ samples ($6\cdot10^{-7}$\%): a $25\,000\times$ improvement.

\begin{table}[p!]
	\caption{Wrong answers and refusals at $\sigma = 1$~rad, $\tau = \qty{1}{\milli\second}$.}
	\label{tab:outcomes}
	\footnotesize
	\renewcommand{\arraystretch}{1.3}
	\setlength{\tabcolsep}{3pt}
	\sisetup{mode=text, detect-weight=true}
	\begin{tabularx}{\columnwidth}{
		@{}X
		S[table-format=2.3]
		S[table-format=2.6]@{\hspace{1.5em}}
		S[table-format=3.2]
		S[table-format=1.3]@{}
	}
		& & & \multicolumn{2}{c@{}}{\textbf{Refusals [\%]}} \\
		& {\textbf{Failed [\%]}} & {\textbf{Wrong [\%]}} & {Correct} & {Incorrect} \\
		\toprule
		\input figures/tab_compact_v4
		\bottomrule
	\end{tabularx}
	\\[.5em]
    \footnotesize Note: \textit{Failed} is any problem that did not converge to a same-aspect answer, measured on at least $10^6$ problems per row and extended until 100 failures are observed. Of these, \textit{Wrong} is the percentage answered in the wrong aspect, and \textit{Incorrect refusals} the percentage refused when a correct answer existed. \textit{Correct refusals} are measured on a separate $N=10^5$ set whose seed aspect holds no solution. The label \textit{QuIK*} is used to denote the best of QuIK, QuIK4, or QuIK5 after tuning. SDLS and adaptive damping also use the best of NR, QuIK, QuIK4, or QuIK5. The best entry in each column is bold.

    $\dagger$ No wrong answer in $10^6$ problems (95\% upper bound 0.00027\%).
\end{table}

\Cref{tab:outcomes} separates the failure types at $\tau = \qty{1}{\milli\second}$. The distinction that matters most is between wrong answers, where the solver returns a configuration in another aspect, and refusals, where it declines to answer. A refusal is safer: visible to the caller, which can hold its previous setpoint; a wrong answer is not. With the nominal solver, 1.35\% of solvable problems produce a wrong-aspect answer. The limiter reduces this $28\times$ to 0.05\%, and with certified restarts no wrong answer was returned in $10^6$ problems, while nearly all unsolvable problems were correctly refused. Prior methods show the opposite trend: the majority of failures are wrong answers, from 1.3\% with adaptive damping to 79\% with TRAC-IK, and they return wrong answers on unsolvable problems at much higher rates: from 16\% for SDLS to 97\% for TRAC-IK, with only adaptive damping (0.5\%) performing reasonably.

\subsection{Generality and cuspidal chains}\label{sec:generality}

\Cref{fig:generality} repeats the measurement with QuIK4 on three further chains: the Stanford arm~\cite{scheinmanDesign1969}, a 6-DOF chain with one prismatic joint; the Denso VS-6556, a Puma-style arm rendered cuspidal by its kinematic calibration~\cite{dehghaniRobotMounted2023,lloydPrecisionRobotic2024}; and the Kinova Jaco~\cite{Campeau-Lecours2017}, cuspidal by design through its non-spherical wrist~\cite{salunkheTrajectory2023}. Here, not all chains have closed-form solutions to determine their aspects, so a problem is kept only when the straight chord from $\bq_0$ to $\bq^\star$ is certified free of singularities; this is a stricter filter than used in \cref{fig:ladder,fig:sota}. On the cuspidal chains, several solutions can share the seed's aspect, so a solve succeeds when it returns $\bq^\star$ or another solution proven to lie in its aspect, by a certified chord from $\bq_0$ or $\bq^\star$ or along the solver's own path; an answer that cannot be decided counts as a failure.
To avoid grading the safeguards by their own certifier, the dataset labeling used a more expensive offline labeler: a Chebyshev interpolant of $\Delta$ of degree up to 64 with global root finding~\cite{boydComputingZeros2002}, along with the certifier of \cref{sec:certify} with a very large safety factor, $\kappa = 1024$, run separately on 64 uniform sub-intervals such that its sample points never coincide with the runtime certifier. A chord is considered clean when the two mechanisms certify it. Against the KR6's closed-form aspect label, this approach made no error on $10^6$ random chords at $\sigma = 1$.

\begin{figure}[b!]
    \includegraphics[width=1\columnwidth,trim={0.25cm 0.2cm 0.1cm 0.1cm},clip]{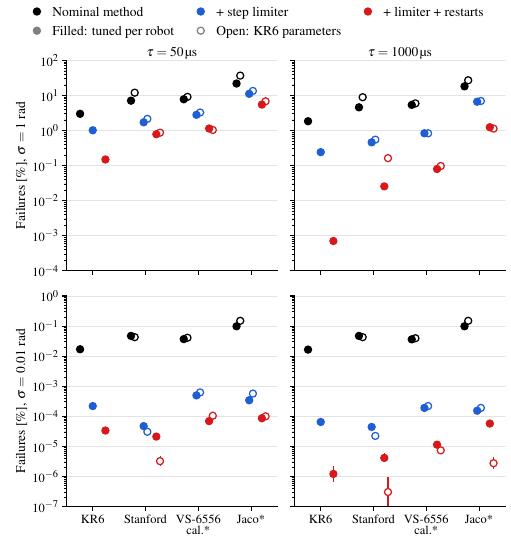}
    \caption{Failure rate of the safeguards on four chains with QuIK4, at large ($\sigma = 1$~rad, top) and small ($\sigma = 0.01$~rad, bottom) initial joint error, for deadlines \mbox{$\tau = \qty{50}{\micro\second}$} and \qty{1}{\milli\second}. Filled markers use parameters tuned for each robot; open markers reuse the KR6's parameters. Problems are kept only when the chord $\bq_0 \to \bq^\star$ is certified singularity-free. On the non-cuspidal \textit{KR6} and \textit{Stanford}~\cite{scheinmanDesign1969} chains, a failure is any solve that does not return $\bq^\star$ within $\tau$; on the cuspidal (starred) \textit{VS-6556 cal.} and \textit{Jaco}~\cite{Campeau-Lecours2017} chains, any solve that does not return $\bq^\star$ or another solution proven to lie in its aspect, with an undecided answer counted as a failure. Bars are 95\% Wilson intervals; a triangle marks a point with no failures, drawn at its 95\% upper bound.}
    \label{fig:generality}
\end{figure}

The safeguards largely carry over to the other chains. On the Stanford arm at $\sigma = 1$ and $\tau = \qty{1}{\milli\second}$, the limiter reduces failures by $10\times$ and limiter+restarts by $180\times$. On the cuspidal chains, failures on the VS-6556 fall from 5.4\% to 0.84\% with the limiter and 0.080\% with limiter+restarts, and on the Jaco from 18\% to 6.6\% and 1.2\% respectively. At $\sigma = 0.01$, the limiter alone reduces failures $190\times$ on the VS-6556 and $640\times$ on the Jaco, to $1.9\cdot10^{-4}$\% and $1.5\cdot10^{-4}$\%, and limiter+restarts reach $1.1\cdot10^{-5}$\% and $6\cdot10^{-5}$\%. Without safeguards, 4.0\% (VS-6556) and 9.9\% (Jaco) of solves at $\sigma = 1$ return an answer proven to lie in another aspect; with either the limiter or limiter+restarts enabled, no wrong answers were observed at any deadline.

Parameters also transfer: the KR6's parameters are no more than a factor of two worse than each chain's own tuning at nearly every point, the exception being the Stanford arm's restart cell at $\sigma = 1$ and $\tau = \qty{1}{\milli\second}$. At $\sigma = 0.01$ the transferred parameters sometimes differ, as failure rates there lie below the resolution of the tuning sets (\cref{sec:setup}).

Overall, failure rates are higher on the cuspidal arms. This reflects harder problems rather than a loss of aspect safety: without safeguards, QuIK4 already fails $3\times$ (VS-6556) and $10\times$ (Jaco) as often as on the KR6, and with the safeguards none of the remaining failures is a proven aspect change.

The calibrated \textit{Denso VS-6556} illustrates a key case. Industrial arms typically have aligned joints, closed-form inverse kinematics, and non-cuspidal geometry. Kinematic calibration, necessary for maximum accuracy~\cite{dehghaniRobotMounted2023}, perturbs these parameters, invalidating the closed-form solution and requiring a numerical solver. This introduces risk in real-time applications: a divergent solver could halt operations or, in the worst case, produce unbounded joint speeds. With the methods of this paper, failure rates on the calibrated arm at trajectory-tracking error levels ($\sigma=0.01$) fall to about 1 in~$10^7$, all of them refusals on near-singular targets rather than wrong answers. As real-world operations typically avoid singular workspace regions, numerical inverse kinematics with the aspect-preserving methods introduced here can be used in real-time control with much greater safety than before.

\section{Conclusion}\label{sec:conclusion}

This paper used the higher-order kinematic derivatives of a serial chain to serve three complementary purposes in real-time inverse kinematics. First, Taylor-model solvers of third and fourth order were derived, along with the orientation-error correction map needed beyond second order, yielding the QuIK4 and QuIK5 methods with fourth- and fifth-order convergence respectively. Second, an aspect-preserving step limiter was introduced that efficiently extrapolates $\det\bJ$ to predict singularities along the proposed step, and shortens the step as necessary. Unlike damped least-squares methods, which shorten or warp the step based on the conditioning of $\bJ$ at the current iterate, the limiter acts on a prediction of where the step is heading. This allows it to limit steps surgically: a step from a well-conditioned pose toward a singularity is still shortened, while one at an ill-conditioned pose facing away from the singularity is left intact, achieving high aspect safety without penalizing convergence speed. Lastly, an aspect-certification algorithm, built from Hermite interpolation of the determinant along the chord from seed to answer, verifies each solution, and \textit{certified restarts} recover refused or failed solves from seed points proven to lie in the same aspect.

In fixed-deadline testing, the limiter alone reduced failures by up to $8\times$ at large seed error ($\sigma=1$\,rad) and $240\times$ at tracking-level error ($\sigma=0.01$\,rad) over prior methods. Combined with certified restarts, these improvements grew to $67\times$ and $25\,000\times$, with failure rates as low as $6\cdot 10^{-7}$\%. The safeguards transferred to cuspidal chains, including a calibrated industrial arm and the Kinova Jaco, with no proven aspect changes at any tested deadline or error level.

The solvers, safeguards, and tuning tools are released as a C++ library with ROS~2 integration~\cite{quikpCode}. When a numerical inverse kinematics solver can deliver aspect-preserving answers in a fraction of a millisecond, the longstanding barriers to its use in real-time control are removed, and applications that depend on it, such as kinematically calibrated robots or manipulators lacking a closed-form solution, can operate with a level of safety that was previously out of reach.

\appendix[Order-$(p+1)$ Taylor-Model Convergence]
\label{app:order}

\begin{proposition}\label{prop:order}
    Let $\b{F}$ be $p+1$ times continuously differentiable near a root $\bq^\star$ at which
    $\bJ(\bq^\star)$ is invertible, and let $\Dm{m}$ be the exact derivatives of $\b{F}$. Then
    there is a constant $C$ such that, for every $\bq$ close enough to $\bq^\star$, the step
    $\dq_{(p)}$ of \cref{eq:taylorstep} satisfies
    \begin{align*}
        \big\|\bq + \dq_{(p)} - \bq^\star\big\| \le C\,\big\|\bq - \bq^\star\big\|^{p+1},
    \end{align*}
    so the iteration converges locally with order $p+1$.
\end{proposition}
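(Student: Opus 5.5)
We argue by induction on $p$, comparing each step $\dq_{(p)}$ with the exact correction $\b{s} \triangleq \bq^\star - \bq$. Write $\b{F} = \b{e}$ and $\varepsilon = \|\bq-\bq^\star\|$. The claim to prove for each $p$ is the stronger statement
\begin{align*}
    \big\|\dq_{(p)} - \b{s}\big\| \le C_p\,\varepsilon^{p+1}.
\end{align*}
This is exactly the stated bound, since $\bq + \dq_{(p)} - \bq^\star = \dq_{(p)} - \b{s}$.

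\textbf{Base case.} For $p=1$ this is the standard quadratic convergence of Newton--Raphson. Invertibility of $\bJ(\bq^\star)$ and continuity give $\|\bJ(\bq)^{-1}\| \le \beta$ on a ball around $\bq^\star$. Taylor's theorem with remainder, applied to $\b{0} = \b{F}(\bq^\star) = \b{F}(\bq + \b{s})$, gives $\b{F}(\bq) + \bJ\b{s} = O(\varepsilon^2)$, and hence $\dq_{(1)} - \b{s} = O(\varepsilon^2)$. The same estimate also gives $\|\dq_{(1)}\| = O(\varepsilon)$, which we need below.

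\textbf{Inductive step.} Assume $\|\dq_{(p-1)} - \b{s}\| \le C_{p-1}\varepsilon^{p}$; in particular $\|\dq_{(p-1)}\| = O(\varepsilon)$.
\begin{enumerate}
\item Expand $\b{F}(\bq+\b{s})=\b{0}$ to degree $p$ with a remainder of order $\varepsilon^{p+1}$, using the $(p+1)$-times continuous differentiability and exact $\Dm{m}$. This gives
\begin{align*}
    \b{F}(\bq) + \sum_{m=1}^{p} \tfrac{1}{m!}\Dm{m}[\b{s}^{m}] = O(\varepsilon^{p+1}).
\end{align*}
\item Rewrite this as $\b{F}(\bq) + \b{B}(\b{s})\,\b{s} = O(\varepsilon^{p+1})$ with $\b{B}(\b{s}) \triangleq \sum_m \tfrac{1}{m!}\Dm{m}[\b{s}^{m-1}]$. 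The rewriting is valid because each $\Dm{m}$ is symmetric in its contracted slots, so contracting $m-1$ copies and then applying the result to one more copy of $\b{s}$ gives $\Dm{m}[\b{s}^m]$.
\item By definition, $\Aop{p} = \b{B}(\dq_{(p-1)})$. Each $\Dm{m}$ is bounded near $\bq^\star$, and $\|\dq_{(p-1)}\|, \|\b{s}\| = O(\varepsilon)$. A telescoping bound on the multilinear terms therefore gives $\|\b{B}(\dq_{(p-1)}) - \b{B}(\b{s})\| = O(\varepsilon \cdot \|\dq_{(p-1)} - \b{s}\|) = O(\varepsilon^{p+1})$.
\item Since $\Aop{p} = \bJ + O(\varepsilon)$, it is invertible with $\|\Aop{p}^{-1}\| \le 2\beta$ for $\varepsilon$ small, by a Neumann series argument.
\item Combine. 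From step~1, $\b{F}(\bq) = -\Aop{p}\b{s} + (\Aop{p}-\b{B}(\b{s}))\b{s} + O(\varepsilon^{p+1})$. The middle term is $O(\varepsilon^{p+2})$ by step~3. Hence $\dq_{(p)} = -\Aop{p}^{-1}\b{F}(\bq) = \b{s} + O(\varepsilon^{p+1})$, which closes the induction.
\end{enumerate}

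\textbf{Main obstacle.} The delicate point is step~3, the perturbation of the model matrix. The error in the substituted step costs only one factor of $\varepsilon$ relative to the previous order, and this is recovered because the error enters $\b{B}$ only through terms with $m\ge2$. Those terms carry at least one additional factor of $\|\b{s}\|$ once applied to $\b{s}$. This is why the order gains exactly one at each rung.

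Care is also needed to choose a single neighbourhood on which all the constants $C_1,\dots,C_p$ and the inverse bounds hold uniformly. We do this by shrinking the ball finitely many times, once per induction level.

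Finally, the proposition assumes exact derivatives of $\b{F}$. For the orientation error this is precisely what the correction $\b{c}_p$ of \cref{eq:quik_corr} supplies, and we treat that as given.
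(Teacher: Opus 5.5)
Your proof follows the same route as the paper's. You argue by induction on $p$, write the degree-$p$ Taylor expansion at the root as $\b{B}(\b{s})\,\b{s}$, compare it with the model matrix $\Aop{p} = \b{B}(\dq_{(p-1)})$, and invert $\Aop{p} = \bJ + O(\varepsilon)$.

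However, step~3 is false as written. In the telescoping, the $m=2$ term contributes $\tfrac12\Dm{2}[\dq_{(p-1)} - \b{s}]$. This is linear in the step error and carries no extra factor of $\varepsilon$; only the terms with $m\ge3$ carry $\varepsilon^{m-2}$. So the correct bound is
\[
\|\b{B}(\dq_{(p-1)}) - \b{B}(\b{s})\| = O(\|\dq_{(p-1)}-\b{s}\|) = O(\varepsilon^{p}),
\]
not $O(\varepsilon^{p+1})$. For $p=2$, for instance, $\tfrac12\H[\dq_{(1)}-\b{s}]$ is generically of order $\varepsilon^2$.

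Consequently, the middle term $(\Aop{p}-\b{B}(\b{s}))\b{s}$ in step~5 is $O(\varepsilon^{p+1})$, not $O(\varepsilon^{p+2})$. The conclusion survives, because $O(\varepsilon^{p+1})$ is all step~5 needs. The problem is that you counted the recovered factor of $\varepsilon$ twice. It arises only once, when the matrix difference is applied to $\b{s}$. Your ``main obstacle'' paragraph says exactly this, as does the paper's bracket estimate $\tfrac12\Dm{2}[\bdelta^\star - \dq_{(p-1)}] + O(\varepsilon^{p+1}) = O(\varepsilon^{p})$. Restate step~3 with this bound and the argument closes.
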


\begin{IEEEproof}
    Write $\b{A}_p(\b{v}) = \sum_{m=1}^{p}\tfrac{1}{m!}\Dm{m}[\b{v}^{m-1}]$, so that the
    Taylor polynomial of $\b{F}$ about $\bq$ is $\b{F}(\bq) + \b{A}_p(\b{v})\,\b{v}$. Let
    $\bdelta^\star = \bq^\star - \bq$ and $\varepsilon = \|\bdelta^\star\|$. The steps of
    \cref{eq:taylorstep} satisfy $\b{A}_p(\dq_{(p-1)})\,\dq_{(p)} = -\b{F}(\bq)$, with
    $\dq_{(1)} = \dq_{\rm nr}$. Since $\bq + \dq_{(p)} - \bq^\star = \dq_{(p)} - \bdelta^\star$,
    it suffices to show that $\dq_{(p)} - \bdelta^\star = O(\varepsilon^{p+1})$. We proceed by
    induction on $p$.

    For $p=1$ this is Newton's method. Assume the result for $p-1$. Since
    $\b{F}(\bq^\star) = \b{0}$, Taylor's theorem gives
    $\b{A}_p(\bdelta^\star)\,\bdelta^\star = -\b{F}(\bq) + O(\varepsilon^{p+1})$. Subtracting
    the defining equation of $\dq_{(p)}$,
    \begin{align*}
        \b{A}_p(\dq_{(p-1)})\,\big(\dq_{(p)} - \bdelta^\star\big)
        &= \big[\b{A}_p(\bdelta^\star) - \b{A}_p(\dq_{(p-1)})\big]\,\bdelta^\star \\
        &\quad + O(\varepsilon^{p+1}).
    \end{align*}
    The bracket equals $\tfrac12\Dm{2}[\bdelta^\star - \dq_{(p-1)}] + O(\varepsilon^{p+1})$,
    which is $O(\varepsilon^{p})$ by the induction hypothesis, so the right side is
    $O(\varepsilon^{p+1})$. Near a regular root, $\b{A}_p(\dq_{(p-1)}) = \bJ + O(\varepsilon)$
    is invertible with a bounded inverse, and therefore
    \mbox{$\dq_{(p)} - \bdelta^\star = O(\varepsilon^{p+1})$}.
\end{IEEEproof}

The proposition requires $\Dm{m}$ to be the exact derivatives of $\b{F}$. For the position
rows of the error they are; for the orientation rows they are not beyond second order,
which is the mismatch that the corrections $\b{c}_p$ of \cref{eq:quik_corr} remove.

\balance
\bibliographystyle{IEEEtran}
\bibliography{quikp_clean}

\end{document}

%% file: packages.tex
\usepackage[hidelinks]{hyperref}
\usepackage{amsmath}
\usepackage{amssymb}
\usepackage{graphicx}
\usepackage{array}
\usepackage{bm}
\usepackage{upgreek}
\usepackage{cases}
\usepackage[longtable]{multirow}
\usepackage{mathtools}
\usepackage{tabularx}
\usepackage{siunitx}
\usepackage{bbm}
\usepackage{dblfloatfix}
\usepackage{booktabs}
\usepackage{color}
\usepackage{lipsum}
\usepackage{cite}
\usepackage[capitalize,nameinlink]{cleveref}
\usepackage{printlen}
\usepackage{tikz}
\usepackage{comment}

\usepackage{enumitem}
\usepackage{balance}

\usepackage{times} %

%% file: commands.tex
\def\cwidth{8.64468cm}
\def\twidth{17.71106cm}
\newlength{\cwidthLength}
\newlength{\twidthLength}
\newcolumntype{L}{>{\raggedright\arraybackslash}X}
\newcolumntype{C}{>{\centering\arraybackslash}X}
\newcolumntype{R}{>{\raggedleft\arraybackslash}X}

\crefname{equation}{}{}
\crefname{figure}{Fig.}{Figs.}
\crefname{table}{Table}{Tables}
\crefname{section}{Section}{Sections}

\newcommand{\R}{\mathbb{R}}
\renewcommand{\b}[1]{\mathbf{#1}}
\newcommand{\bscript}[1]{\hspace{-0.05em}{}^{#1\hspace{-0.05em}}}
\newcommand{\transpose}{\mathsf{T}}
\DeclareMathOperator{\tr}{tr}

\newcommand{\bq}{\b{q}}

\DeclareMathAlphabet{\mathsfb}{T1}{phv}{b}{n}
\newcommand{\tensor}[1]{\mathsfb{#1}}

\renewcommand{\H}{\tensor{H}}       %

\NewDocumentCommand{\bT}{ o o }{
	\IfNoValueF{#1}{\bscript{#1}}
	\IfNoValueTF{#2}{\b{T}}{\b{T}_{#2}}
}
\NewDocumentCommand{\bJ}{ s o o }{
	\IfNoValueF{#2}{\bscript{#2}}
	\IfBooleanTF {#1}
		{\IfNoValueTF{#3}{\dot{\b{J}}}{\dot{\b{J}}_{#3}}}
		{\IfNoValueTF{#3}{\b{J}}{\b{J}_{#3}}}
}
\NewDocumentCommand{\bR}{ s o o }{
	\IfNoValueF{#2}{\bscript{#2}}
	\IfBooleanTF {#1}
		{\IfNoValueTF{#3}{\dot{\b{R}}}{\dot{\b{R}}_{\rm #3}}}
		{\IfNoValueTF{#3}{\b{R}}{\b{R}_{#3}}}
}

\newcommand{\etal}{\textit{et al.}}

\newcommand{\Dm}[1]{\tensor{D}^{(#1)}}        %
\newcommand{\dq}{\delta\bq}                    %
\newcommand{\dqp}[1]{\!\big[\dq^{\,#1}\big]}   %
\newcommand{\Aop}[1]{\b{A}_{#1}}               %

\def\sizeA{0.18cm}

\makeatletter
\newcommand*\circled[1]{%
  ~\smash{\tikz[baseline={([yshift=-0.1em]char.base)}]{%
    \draw[fill=white] (0,0) circle (\sizeA);%
    \node[inner sep=1pt,font=\scriptsize,align=center] (char) at (0,0) {#1};%
  }}%
  \futurelet\@let@token\circled@sp}
\def\circled@sp{%
  \ifx\@let@token\@sptoken\else\kern0.1em\fi}
\makeatother

\DeclareSymbolFont{ugrbold}{U}{eur}{b}{n}
\DeclareMathSymbol{\bupalpha}{\mathord}{ugrbold}{"0B}
\DeclareMathSymbol{\bupomega}{\mathord}{ugrbold}{"21}
\DeclareMathSymbol{\buplambda}{\mathord}{ugrbold}{"15}
\DeclareMathSymbol{\buppsi}{\mathord}{ugrbold}{"20}
\DeclareMathSymbol{\bupiota}{\mathord}{ugrbold}{"13} 
\DeclareMathSymbol{\bupdelta}{\mathord}{ugrbold}{"0E} \newcommand{\bdelta}{\bupdelta}
\newcommand{\bpsi}{\buppsi}
\newcommand{\bomega}{\bupomega}
\newcommand{\blambda}{\buplambda}

\newcommand{\skewMat}[1]{\begin{bmatrix}{#1}\end{bmatrix}_{\!\scalebox{1}{\ensuremath{\times}}}}

\definecolor{darkaqua}{RGB}{0,114,189}

\renewcommand{\th}{\textsuperscript{th}}

%% file: figures/tab_derivcost.tex
$\bT$, $\bJ$ (FK/Jacobian) & 378 / 526 & 0.057 && 378 / 526 & 0.057 \\
$\Dm{2}\dqp{} = \H\dqp{}$ & 324 / 432 & 0.037 && 162 / 144 & 0.021 \\
$\Dm{3}\dqp{2}$ & 1\,569 / 1\,878 & 0.298 && 270 / 252 & 0.039 \\
$\Dm{4}\dqp{3}$ & 8\,364 / 9\,204 & 2.757 && 378 / 396 & 0.046 \\
\midrule
\textbf{Total (all orders)} & 10\,635 / 12\,040 & 3.149 && 1\,188 / 1\,318 & 0.163 \\

%% file: figures/tab_redundant_cost_lib_v2.tex
\multirow{4}{*}{\rotatebox[origin=c]{90}{\scriptsize\bfseries\shortstack{Pose only}}} & NR & 2.00 & {--} & 0.20 & 0.38 & 3.1 & 0.7 \\
 & QuIK & 3.00 & {--} & 0.34 & 0.57 & 2.0 & 0.8 \\
 & QuIK4 & 4.00 & {--} & 0.53 & 0.93 & 2.0 & 1.1 \\
 & QuIK5 & 5.00 & {--} & 0.81 & 1.47 & 1.9 & 1.6 \\
\midrule
\multirow{4}{*}{\rotatebox[origin=c]{90}{\scriptsize\bfseries\shortstack{+ redundancy\\[-1.5pt](2nd-order)}}} & NR & 2.00 & 1.99 & 0.53 & 1.56 & 3.2 & 2.1 \\
 & QuIK & 3.00 & 1.99 & 0.66 & 2.26 & 2.2 & 1.9 \\
 & QuIK4 & 4.00 & 1.99 & 0.86 & 3.13 & 2.7 & 2.8 \\
 & QuIK5 & 4.99 & 1.99 & 1.14 & 4.21 & 2.9 & 3.7 \\
\midrule
\multirow{4}{*}{\rotatebox[origin=c]{90}{\scriptsize\bfseries\shortstack{+ redundancy\\[-1.5pt](full-order)}}} & NR & 2.00 & 1.99 & 0.49 & 1.51 & 3.1 & 3.3 \\
 & QuIK & 3.00 & 2.99 & 1.18 & 4.46 & 2.1 & 3.7 \\
 & QuIK4 & 4.00 & 4.00 & 2.57 & 10.34 & 2.0 & 6.4 \\
 & QuIK5 & 5.00 & 4.97 & 4.85 & 21.03 & 2.0 & 10.5 \\
\midrule
\multicolumn{2}{@{}l}{Null-space descent} & 2.00 & 1.02 & 0.21 & 0.36 & 27.4 & 72.0 \\

%% file: figures/tab_compact_v4.tex
Nominal QuIK* method & 2.03 & 1.35 & 99.1 & 0.67 \\
\hspace{0.5em}+ limiter & 0.26 & 0.049 & 99.6 & 0.21 \\
\hspace{0.5em}+ limiter + restarts & \bfseries 0.031 & {\bfseries (none)$^\dagger$} & \bfseries 100.0 & \bfseries 0.031 \\
\midrule
SDLS & 8.51 & 8.17 & 84.2 & 0.34 \\
Adaptive damping & 2.09 & 1.33 & 99.5 & 0.76 \\
DMH & 31.2 & 22.0 & 66.6 & 9.22 \\
KDL & 22.5 & 21.7 & 36.3 & 0.75 \\
TRAC-IK & 80.4 & 79.3 & 3.38 & 1.11 \\